\definecolor{lightgreen}{RGB}{200,255,200}
\definecolor{lightred}{RGB}{255,200,200}
\definecolor{ink}{HTML}{1F2937}
\definecolor{kb}{HTML}{3B82F6}
\definecolor{kr}{HTML}{EF4444}
\definecolor{kg}{HTML}{22C55E}
\DeclareMathOperator*{\argmax}{arg\,max}
\DeclareMathOperator*{\argmin}{arg\,min}
\newtheorem{theorem}{Theorem}
\newtheorem{definition}{Definition}
\newcommand{\eg}[1]{\textcolor{black}{#1}}
\newcommand{\ltlf}{$\text{LTL}_f$\xspace}
\newcommand{\model}[1]{\texttt{ABS}}
\newif\iflong
\title{ABS: enforcing constraint satisfaction on generated Sequences via Automata-guided Beam Search}
\author{Vincenzo~Collura \\
  University of Luxembourg \\
  Luxembourg \\
  \texttt{vincenzo.collura@uni.lu} \\
  \And
  Karim~Tit \\
   University of Luxembourg \\
  Luxembourg \\  \texttt{karim.tit@uni.lu} \\
  \And
  Laura~Bussi \\
  University of Luxembourg \\
  Luxembourg \\
  \texttt{laura.bussi@uni.lu} \\
  \And
  Eleonora~Giunchiglia \\
  Imperial College \\
  London \\
  \texttt{e.giunchiglia@imperial.ac.uk} \\
  \And
  Maxime~Cordy \\
  University of Luxembourg \\
  Luxembourg \\
  \texttt{maxime.cordy@uni.lu} \\
}
\begin{document}

\maketitle

\begin{abstract}
Sequence generation and prediction form a cornerstone of modern machine learning, with applications spanning natural language processing, program synthesis, and time-series forecasting. These tasks are typically modeled in an autoregressive fashion, where each token is generated conditional on the preceding ones, and beam search is commonly used to balance exploration and fluency during decoding. While deep learning models and Large Language Models (LLMs) excel at capturing statistical patterns in this setting, they remain ill-equipped to guarantee compliance with formal constraints.
In this paper, we introduce \model{}: a general and model-agnostic inference-time algorithm that guarantees compliance with any constraint that can be compiled into a Deterministic Finite Automaton (DFA), without requiring retraining. \model{} leverages the DFA to guide a constrained variant of beam search: at each decoding step, transitions leading to violations are masked, while remaining paths are dynamically re-ranked according to both the model’s probabilities and the automaton’s acceptance structure. We formally prove that the resulting sequences are guaranteed to satisfy the given constraints, and we empirically demonstrate that \model{} also improves output quality.
We validate our approach on three distinct tasks: constrained image-stream classification, controlled text generation, and text infilling. In all settings, \model{} achieves perfect constraint satisfaction, while outperforming or matching state-of-the-art baselines on standard quality metrics and efficiency.
\end{abstract}
  
\section{Introduction}

Sequence generation is a fundamental paradigm in machine learning, underpinning applications such as natural language processing~\cite{}, program synthesis~\cite{}, and image-stream prediction. These problems are typically approached through autoregressive modeling, where outputs are constructed sequentially, and beam search is a standard decoding strategy used to trade off between exploration and fluency. Despite their remarkable ability to capture statistical patterns in this setting, autoregressive models like Large Language Models (LLMs) offer no guarantees that the sequences they produce will satisfy formal constraints, such as temporal specifications or structural rules.

Existing approaches to constrained sequence generation can be grouped into four main categories:
(i) constrain the beam search to ensure the presence or absence of specific outputs
~\citep{Lu2021,Lu2022}, (ii) use auxiliary models to steer the sequence generation~\citep{krause2021gedi,zhang2024ctrlg},  (iii) treat constraints as conditioning and sample from the posterior~\citep{Miao19sampling,loula2025smc}, and (iv) employ automata-based guidance to enforce complex constraints~\citep{willard2023efficient, lundberg2024guidance, manginas2025nesyaneurosymbolicautomata, KR2023-65}.
\eg{However, none of the existing approaches simultaneously (i) guarantees constraint satisfaction, (ii) avoids additional finetuning or auxiliary models, (iii) achieves low latency, and (iv) preserves the quality of the generated text.}

\eg{To address these limitations, we propose \model{}, a decoding framework that enforces user-specified constraints during autoregressive generation. A key design choice is that our method targets hard, non-negotiable constraints. 
This makes \model{} particularly suited for high-stakes and safety-critical applications. In contrast, soft or probabilistic patterns (e.g., “usually after event A, event B happens”) are naturally handled by the probabilistic nature of autoregressive models. Thus, our method is complementary: the model accounts for soft regularities through its learned distribution, while \model{} provides an additional mechanism that ensures strict adherence to symbolic rules.} 

\eg{Our method supports any constraint that can be compiled into a Deterministic Finite Automaton (DFA), and therefore applies to the full class of regular languages. This generality subsumes several specification formalisms, including Linear Temporal Logic over finite traces (LTLf) and regular expressions. Consequently, \model{} is not tied to a single logic, but rather provides a unifying mechanism to enforce diverse temporal and structural rules across modalities.} 

\eg{Starting from a DFA expressing the hard constraints, we integrate its transition dynamics into beam search by re-ranking tokens according to both model logits and DFA state information. This integration guides the generation away from non-accepting sink states (constraint violated) and toward accepting states (constraint satisfied), thereby guaranteeing that all generated sequences satisfy the specified constraints. A natural concern is that such guidance could bias the generation, e.g., by producing overly short outputs or interfering with reasoning. To mitigate this, \model{} introduces a Ramping Push-Up mechanism: the influence of the automaton is adaptive, starting very gentle when many decoding steps remain and gradually intensifying only if the model risks running out of steps to reach acceptance. This design preserves the model’s natural generation abilities while ensuring that constraints are ultimately satisfied.
}

\eg{We demonstrate the broad applicability of \model{} by evaluating it across three distinct tasks: image sequence classification, constrained text generation, and text infilling. In all settings, \model{} achieves perfect constraint satisfaction while matching or surpassing state-of-the-art baselines on standard quality metrics.
\model{} achieves these results with substantially lower computational overhead, yielding faster runtimes at fixed beam sizes. Moreover, our text generation experiments show that our adaptive Ramping Push-Up mechanism achieves a careful balance between control and naturalness, making \model{} offer both formal guarantees and practical flexibility. Our implementation and benchmarks are provided in the supplementary materials and will be released publicly on GitHub.}

\section{Notation and Problem Statement}\label{sec:notation}

\textbf{Notation.} \eg {Let $\mathcal{X}$ be the finite set of possible values that the generated outputs can take at every time step (i.e., our vocabulary). Let $\mathcal{X}^+$ denote the set of all non-empty, finite sequences over $\mathcal{X}$. Let $f_\theta$ be a neural network with learnable weights $\theta$, generating finite sequences of outputs, each denoted by $x_{1:T}$ ($T$ can vary from one sequence to another).} 
text generation), neural networks are trained with sequences of finite length. 
\eg{Given a sequence $x_{1:T}$, we denote the prefix up to the $t$-th output by $x_{<t}$ and the $t$-th element of the sequence by $x_t$.}
\eg{Finally, let $p_\theta(x_{1:T}) = \prod_{i=1}^T p_\theta(x_t \mid x_{<t})$ be the probability of generating the sequence $x_{1:T}$, where $p_\theta(x_t \mid x_{<t})$ is defined by applying a softmax to the logits outputted by $f_\theta(x_{<t})$.}

\begin{definition}
A Deterministic Finite Automaton (DFA) is a 5-tuple $\mathcal{A} = (\mathcal{Q}, \mathcal{C}, \delta, q_0, \mathcal{F})$ where: \eg{(i) $\mathcal{Q}$ is a finite set of states, (ii) $\mathcal{C}$ is a finite set of symbols (the alphabet),
(iii) $\delta: \mathcal{Q} \times \mathcal{C} \rightarrow \mathcal{Q}$ is the transition function,
(iv) $q_0 \in \mathcal{Q}$ is the initial state, (v) $\mathcal{F} \subseteq \mathcal{Q}$ is the set of accepting states. }
\end{definition}
\eg{An example of a DFA is given in Figure~\ref{fig:dfa_and_table} (left). In simple cases, the DFA alphabet $\mathcal{C}$ coincides with the model vocabulary $\mathcal{X}$. However, our approach supports more complex mappings. For example, LLMs typically output tokens while constraints are often written over words. To account for such cases, }, we assume there exists a function $\nu:\mathcal{X}^+ \to \mathcal{C}^+$ such that $\nu(x_{1:T}) = c_{1:m}$. This function maps a raw output sequence from the network (e.g., a sequence of tokens) to a sequence of higher-level concepts (e.g., words) that form the input symbols for the DFA. 

\eg{A DFA $\mathcal{A}$ accepts a sequence $c_{1:m} = \nu(x_{1:T})\in \mathcal{C}^{+}$, noted $\mathcal{A} \vdash c_{1:T}$, if and only if there exists a sequence of states $r_0,r_1,\ldots,r_m \in \mathcal{Q}^+$ such that: $r_0 = q_0$, $r_{i} = \delta(r_{i-1}, c_i)$ for $i = 1,\ldots,m$ and $r_m \in \mathcal{F}$.} \eg{In addition, a DFA state is a \emph{deadlock state} or \emph{sinking state} if it is a non-accepting state from which no accepting state is reachable via any sequence of transitions.}

\textbf{Problem Statement.} \eg{Given $f_\theta$ and a DFA $\mathcal{A}$, we target the problem of ensuring $\mathcal{A} \vdash \nu(x_{1:T})$ for every sequence $x_{1:T}$ generated by $f_\theta$, while maximizing the sequence log-likelihood under $p_\theta$, i.e.,}
\begin{equation}\label{eq:MAP}
\begin{aligned}
x^{\star}_{1:T} = \underset{x_{1:T}\in\mathcal{X}^T}{\arg\!\max} \sum_{t = 1}^T \log p_{\theta}(x_t \mid x_{<t}) \text{ \quad such that }\mathcal{A}\vdash \nu(x_{1:T}) .
\end{aligned}
\end{equation}
This corresponds to the \textit{Maximum A Posteriori} (MAP) inference problem under constraints specified by the DFA $\mathcal{A}$.

\section{The \model{} Algorithm}\label{sec:contrib}

\colorlet{myred}{red!80!black}
\colorlet{myblue}{blue!80!black}
\colorlet{mygreen}{green!60!black}
\colorlet{mydarkred}{myred!40!black}
\colorlet{mydarkblue}{myblue!40!black}
\colorlet{mydarkgreen}{mygreen!40!black}

\usetikzlibrary{calc,positioning}

\tikzset{>=latex}

\colorlet{myred}{red!80!black}
\colorlet{myblue}{blue!80!black}
\colorlet{mygreen}{green!60!black}

\colorlet{mydarkred}{myred!40!black}
\colorlet{mydarkblue}{myblue!40!black}
\colorlet{mydarkgreen}{mygreen!40!black}

\tikzstyle{node}=[very thick,circle,draw=myblue,minimum size=16,inner sep=0.3,outer sep=0.4]
\tikzstyle{connect}=[->,thick,mydarkblue,shorten >=1]

\tikzstyle{output1}=[rectangle, draw=mygreen, thick, minimum width=1.5cm, minimum height=0.8cm, fill=mygreen!30, text=black]
\tikzstyle{output2}=[rectangle, draw=mygreen, thick, minimum width=1.5cm, minimum height=0.8cm, fill=white, text=black]
\tikzstyle{wordbox}=[rectangle, thick, minimum width=1.5cm, minimum height=0.8cm, text=black]

\tikzset{
  node 1/.style={node, draw=mygreen, fill=mygreen!25},
  node 2/.style={node, draw=myblue,  fill=myblue!20},
  node 3/.style={node, draw=myred,   fill=myred!20},
}

\def\nstyle{int(\lay<\Nnodlen?min(2,\lay):3)}

\eg{The \model{} algorithm consists of two main steps: (i) the {\sl Automata Preprocessing Step}, where we process the given DFA into an efficient computational representation and (ii) the dynamic {\sl Automata-guided Beam Search}, where the beam search is guided by both the model's predictions and the DFA.}

\begin{figure}[t]
    \centering
    \begin{minipage}{0.48\textwidth}
        \centering
        \begin{tikzpicture}[>=stealth, node distance=1.8cm, on grid, auto, font=\scriptsize]
                \tikzset{
                    state/.style={
                        circle,
                        thick,
                        minimum size=0.9cm,
                        text centered,
                        font=\bfseries
                    },
                    blueState/.style={
                        state,
                        fill=myblue!25!white,
                        draw=myblue,
                        text=myblue
                    },
                    orangeState/.style={
                        state,
                        fill=myred!30!white,
                        draw=myred,
                        text=myred
                    },
                    greenAccepting/.style={
                        state,
                        fill=mygreen!30!white,
                        draw=mygreen,
                        double,
                        double distance=0.5pt,
                        text=mygreen
                    },
                    every loop/.style={looseness=8}
                }

                \node[blueState] (q4) at (0,2) {$q_4$};
                \node[blueState] (q3) at (0,0) {$q_3$};
                \node[greenAccepting] (q5) at (2,2) {$q_5$};
                \node[orangeState] (q1) at (2,0) {$q_1$};
                \node[blueState] (q2) at (0,-2) {$q_2$};
                \node[blueState] (q0) at (2,-2) {$q_0$};
                
                \draw[->, thick] (3,-2) -- (q0);
                \draw[->, thick] (q0) -- node[above] {coffee} (q2);
                \draw[->, thick] (q0) edge[loop below] node {$\lnot \text{coffee} \land \lnot \text{cat} \land \lnot \text{toy} \land \lnot \text{eos}$} ();
                \draw[->, thick] (q0) -- node[right] {$\text{cat} \lor \text{toy} \lor \text{eos}$} (q1);
                \draw[->, thick] (q2) -- node[left] {cat} (q3);
                \draw[->, thick] (q2) edge[loop left] node {$\lnot \text{cat} \land \lnot \text{toy} \land \lnot \text{eos}$} ();
                \draw[->, thick] (q2) -- node[above, sloped, pos=0.5] {$\text{toy} \lor \text{eos}$} (q1);
                \draw[->, thick] (q3) -- node[left] {toy} (q4);
                \draw[->, thick] (q3) edge[loop left] node {$\lnot \text{toy}  \land \lnot \text{eos}$} ();
                \draw[->, thick] (q3) -- node[above, sloped, pos=0.5] {$\text{eos}$} (q1);
                \draw[->, thick] (q4) -- node[above] {eos} (q5);
                \draw[->, thick] (q4) edge[loop left] node {$\lnot \text{eos}$} ();
                \draw[->, thick] (q5) -- node[right] {True} (q1);
                \draw[->, thick] (q1) edge[loop right] node {True} ();
            \end{tikzpicture}
    \end{minipage}
    \begin{minipage}{0.5\textwidth}
        \begin{center}
        \begin{tabular}{l|ccccc}
        \toprule
        \textbf{State} & \textbf{coffee} & \textbf{cat} & \textbf{toy} & \textbf{eos} \\
        \midrule
        $q_0$  & $q_2$ & $q_1$ & $q_1$ & $q_1$ \\
        $q_1$ & $q_1$ & $q_1$ & $q_1$ & $q_1$ \\
        $q_2$ & $q_2$ & $q_3$ & $q_1$ & $q_1$ \\
        $q_3$ & $q_3$ & $q_3$ & $q_4$ & $q_1$ \\
        $q_4$ & $q_4$ & $q_4$ & $q_4$ & $q_5$ \\
        $q_5$ & $q_1$ & $q_1$ & $q_1$ & $q_1$ \\
        \bottomrule
        \multicolumn{2}{c}{}\\
        \end{tabular}
        
        \hspace{0cm}\textbf{\vspace{0.3cm} Distances to closest accepting state:} \\
        \hspace{-0.5cm}\quad \; $d(q_0) = 4 \quad d(q_1) = +\infty \quad d(q_2) = 3$ \\
        \hspace{-0.5cm}\quad \; $d(q_3) = 2 \quad d(q_4) = 2 \qquad\; d(q_5) = 0$
                  \end{center}
    \end{minipage}
    \caption{DFA (left) and corresponding state transition table with the distances to the closest accepting states (right) for the constraint: ``{\sl Generate a sentence that contains coffee, cat, and toy in that order}'' (note how this constraint can be easily expressed as a regex or \ltlf formula). The deadlock state is \textbf{\textcolor{kr}{$q_1$}}, the accepting state is \textbf{\textcolor{kg}{$q_5$}} while the initial state is \textcolor{kb}{$q_0$}.}
    \label{fig:dfa_and_table}
\end{figure}

\subsection{Automata Preprocessing Step}
\label{subsec:compile}
In our setting, the maximum length $T$ of the output sequence is not known at the time of DFA preprocessing and is only provided at runtime. Therefore, our automaton processing step must be prepared to handle traces of any finite length. 

\eg{First, we process the given DFA $\mathcal{A} = (\mathcal{Q}, \mathcal{C}, \delta, q_0, \mathcal{F})$ into a more computationally efficient form. Hence,}
we represent the transition function $\delta$ as a matrix $\mathbf{M} \in \mathbb{Z}^{|\mathcal{Q}| \times |\mathcal{C}|}$, where each entry $m_{q,c}$ stores the next state $q' = \delta(q, c)$ for state $q \in \mathcal{Q}$ and symbol $c \in \mathcal{C}$. \eg{For notational convenience, we use $q$ and $c$ to denote both states/concepts and their corresponding indices in $\mathbf{M}$}. This matrix representation enables efficient state transition lookups during the generation process.

\eg{However, the network $f_\theta$ might not necessarily output symbols in the alphabet $\mathcal{C}$. Hence, we also create the cost function $w:\mathcal{C} \to \mathbb{N}$ representing the number of outputs $f_\theta$ has to generate to create concept $c$. This will also represent the cost to move from $q$ to $q'= \delta(q, c)$. How the cost function is defined varies from one use case to another.} For example, in constrained text generation, if the DFA is defined over tokens (i.e., each symbol in $\mathcal{C}$ corresponds to a single token), then each transition naturally has a cost of 1. On the other hand, if the DFA is defined over higher-level concepts such as words or phrases, each of which might require multiple tokens to generate, then the cost of a transition labeled with a concept $c \in \mathcal{C}$ is set to the minimum number of tokens needed to output $c$.

\eg{Given a cost function $w : \mathcal{C} \to \mathbb{N}$ and a DFA  $\mathcal{A} = (\mathcal{Q}, \mathcal{C}, \delta, q_0, \mathcal{F})$, we define the corresponding \emph{weighted DFA} $\mathcal{A}^w$ by assigning each transition $\delta(q,c)$ a cost $\tilde w(q,c)=+\infty$ if $q$ is a sinking state and $\tilde w(q,c)=w(c)$ otherwise. A sequence $c_{1:m} \in \mathcal{C}^*$ is accepted by $\mathcal{A}^w$ 
iff it is accepted by $\mathcal{A}$. For any sequence $c_{1:m}$, its cost is defined as
\begin{equation}
W(c_{1:m}) \;=\; \sum_{i=1}^m \tilde w(q_{i-1},c_i),
\end{equation}
i.e., the sum of the transition costs along the unique accepting run.}
\eg{We apply Dijkstra's algorithm~\citep{dijkstra1959note} to the weighted DFA in order to compute, for each state $q$, the minimal cost of reaching an accepting state $q \in \mathcal{F}$. We define a distance function 
$d : \mathcal{Q} \to \mathbb{N} \cup \{+\infty\}$, 
where $d(q)$ gives the minimal cost of reaching an accepting state from $q$.
} 

\newcommand{\disttoaccept}[1]{\mathbf{d}(#1)}
\newcommand{\codecomment}[1]{
  \textcolor{green!50!black}{\footnotesize\ensuremath{\triangleright}\,#1}
}
\begin{algorithm}[t]
\caption{\model{} algorithm with Ramping Push-Up}\label{alg:TRIDENT}
\begin{algorithmic}[1]
\STATE \textbf{Input:} Prompt $x_0$; output set $\mathcal{X}$; log-prob function $\log p_{\theta}$; DFA $(\mathcal{Q},\mathcal{C}, \delta, q_0, \mathcal{F})$; distance-to-accepting-state function ${d}$; num. beams $k$; max length $T$; ramping params $(\alpha_{\min}, \gamma)$
\STATE \textbf{Output:} DFA compliant sequence $\hat{x}_{1:T}$
\STATE

\STATE Initialize $k$ beams $\mathcal{B} \gets \{(x_0, 0, q_0, d(q_0), \epsilon), \ldots, (x_0, 0, q_0, d(q_0),\epsilon) \}$ \hfill\codecomment{$\epsilon$ indicates empty string}

\FOR{$t = 1$ to $T$}
  \STATE Extend logits for all beams: 
  $
    \mathbf{Z} \gets \big[\log p_{\theta}(x \mid x_{<t}^{i})\big]_{i\in[k],\,x\in\mathcal{X}} \in \mathbb{R}^{k \times |\mathcal{X}|} 
  $
  \STATE $\mathcal{S} \gets \emptyset$ \hfill\quad\codecomment{Candidates pool}
  \FOR{$i=1$ to $k$}
    \STATE $\alpha_t^i \gets \textsc{RampPushUp}(\alpha_{\min},\, d_t^i,\, t,\, T, \, \gamma)$ \hfill\codecomment{See Equation~\ref{eq:ramp}}
    \FORALL{$x \in \mathcal{X}$}
      \STATE $q' \gets \textsc{NextState}(q^i_t, l_t^i, x) $ \hfill\codecomment{See Equation~\ref{eq:next_state} }
      \IF{$d(q') > T - t$}
        \STATE \textbf{continue} \hfill\codecomment{Skip: we cannot reach acceptance in $(T-t)$ steps}
      \ENDIF
     
      \STATE $\tilde z \gets 
        \begin{cases}
          \alpha_t^i \cdot \max \mathbf{Z}[i,:] + (1{-}\alpha_t^i)\cdot \mathbf{Z}[i,x], & \text{if } d(q') < d_t^i \\
          \mathbf{Z}[i,x], & \text{otherwise}
        \end{cases}$
      \STATE $s' \gets s^i + \tilde z$
      \STATE $\mathcal{S} \gets \mathcal{S} \cup \{(x_{<t}^{i}x,s',q',d(q'),l_t^ix)\}$ \codecomment{$x_{<t}^ix,l^ix$ are new sequences obtained via concatenation}   
    \ENDFOR
  \ENDFOR
  \STATE $\mathcal{B} \gets \textsc{TopK}\big(\mathcal{S},\, k;\ \text{key}=s_t^i\big)$ \hfill\codecomment{Pick the $k$ best successors by score}
\ENDFOR
\RETURN Token sequence $\hat{x}_{1:T}=x^{i^{\star}}_{1:T}$, where $i^{\star}={\argmax s^{i}_T}$
\end{algorithmic}
\end{algorithm}

\subsection{Automata-guided Beam Search}
\label{subsec:abs}
Automata-guided Beam Search has two goals: (i) prevent beams from entering deadlock states (from which no accepting state is reachable), and (ii) bias exploration toward accepting states. The full procedure is detailed in Algorithm~\ref{alg:TRIDENT}, with a graphical example in Figure~\ref{fig:beamsearch}. Throughout this Section and in the Algorithm, given two sequences $x_1$, $x_2$, we will indicate with $x_1x_2$ their concatenation.

\eg{In the Algorithm, for every $t$ we maintain a set 
$\mathcal{B}$ of $k$ tuples, each representing the state of one beam. 
The $i$-th tuple has form $(x^i_{<t}, s^i_t, q^i_t, d^i_t, l^i_t)$, where: 
(i) $x^i_{<t}$ is the partial sequence generated so far, 
(ii) $s_t^i$ is the score of the beam which we will compute taking into account both $\mathbf{Z}$ and the DFA, 
(iii) $q^i_t$ is the current DFA state, 
(iv) $d^i_t = d(q^i_t)$ is the distance from $q^i_t$ to the closest accepting state, and (v) $l_t^i$ is a sequence of outputs representing the last concept being ``built up'' to move to the next state (possibly still not completed, e.g., if the LLM is trying to generate ``politician'', $l_t^i$ might be ``polit'').} At $t=0$, all the tuples in $\mathcal{B}$ are initialized in the same way as $(x_0, 0, q_0, d(q_0), \epsilon)$, $x_0$ being the initial prompt and $\epsilon$ being the empty sequence.

\textbf{Ramping Push-Up.} We want the model mostly to follow its own distribution, but to gradually steer it to avoid dead ends and force it to satisfy the constraints when necessary. Hence, we introduce a dynamic mechanism that biases decoding toward DFA transitions leading closer to an accepting state. At each step \(t\) and for each beam \(i\), we define a coefficient \(\alpha_t^i \in [\alpha_{\min},1]\) that scales how strongly we push the logits of promising outputs toward the maximum logit. The coefficient increases as the remaining steps \(T-t\) approach the current DFA distance to acceptance \(d^i_{t}\):  
\newcommand{\defeq}{\overset{\text{\tiny def}}{=}}

\begin{equation}
\label{eq:ramp}
\alpha_t^i = \textsc{RampPushUp}(\alpha_{\min},\, d_t^i,\,t,\,T, \gamma) \defeq
\alpha_{\min} + (1-\alpha_{\min}) \cdot 
\min\!\left(1,\left(\tfrac{d_t^i}{T-t}\right)^\gamma\right),
\end{equation}

where \(\gamma > 0\) controls the sharpness of the ramp. This schedule encourages natural generation when slack is high (\(T-t \gg d^i_t\)), as the bias is mild (\(\alpha^i_t \approx \alpha_{\min}\)), and only intensifies when the model risks running out of steps (\(T-t \approx d^i_t\)).
Additionally, the sharpness parameter \(\gamma\) controls how quickly the pressure ramps up as the sequence nears its limit, with smaller values yielding smoother guidance and larger values enforcing a sharper transition. In practice, both \(\alpha_{\min}\) and \(\gamma\) are selected empirically on the validation set.

\eg{\textbf{Next State. \label{para:next_state}} The DFA transitions are defined over symbols in $\mathcal{C}$, while the network $f_\theta$ (and hence the beam search) operate over the set of output symbols $\mathcal{X}$.} \eg{To overcome this limitation, given the current state $q$, the sequence of outputs representing the last concept being built up $l$, and the current candidate output $x$:
\begin{equation}\label{eq:next_state}
\textsc{NextState}(q, l, x) \;=\;
\begin{cases}
\delta(q, c') & \text{if } c' = lx \in \mathcal{C}, \\[3pt]
q             & \text{if } \exists c \in \mathcal{C} \text{ with prefix } lx, \\[3pt]
\delta(q, \textsc{noMatch})      & \text{otherwise}.
\end{cases}
\end{equation}
The symbol \textsc{noMatch} is a special input that preserves DFA consistency when $x$ does not extend any concept, allowing for unconstrained outputs (e.g., filler words) while preserving DFA consistency.}

\textbf{Distance from Accepting State.} Before adding a candidate to the set of effective candidates, we verify that its distance to the nearest accepting state is at most \( T - t \) (line 13 of Algorithm ~\ref{alg:TRIDENT}). If this condition is not met, the candidate is skipped. This check prevents the model from entering a state from which it cannot reach an accepting state within the remaining steps. This mechanism guarantees that all constraints are satisfied, as proven in Theorem~\ref{theorem:soundness}.

\begin{figure}[t]
    \centering
    
    \begin{minipage}{0.76\textwidth}
        \centering
        \resizebox{\textwidth}{!}{%
            \begin{tikzpicture}[x=1.5cm,y=0.9cm]
  \readlist\Nnod{3,3,2}
  \def\yshift{0.45}
  
   \begin{scope}[x=0.6*1.5cm, y=0.9cm]
    \foreachitem \N \in \Nnod{
      \def\lay{\Ncnt}
      \pgfmathsetmacro\prev{int(\Ncnt-1)}
      \foreach \i [evaluate={\c=int(\i==\N); \y=\N/2-\i-\c*\yshift;
                   \x=\lay; \n=\nstyle;}] in {1,...,\N}{
        
        \node[node \n] (N\lay-\i) at (\x,\y) {};
        
        \ifnumcomp{\lay}{>}{1}{
          \foreach \j in {1,...,\Nnod[\prev]}{
            \draw[white,line width=1,shorten >=1] (N\prev-\j) -- (N\lay-\i);
            \draw[connect] (N\prev-\j) -- (N\lay-\i);
          }
        }{
        }
      }
      \path (N\lay-\N) --++ (0,0.9+\yshift) node[midway,scale=1.3] {$\vdots$};
    }
  \end{scope}

  \foreachitem \N \in \Nnod{
    \def\lay{\Ncnt}
    \ifnum \lay=\Nnodlen
      \foreach \i in {1,...,\N}{
        \ifnum \i=1
          \node[output1] (output-\i) at ($(N\lay-\i)+(1.2,0)$) {The};
          \draw[connect] (N\lay-\i) -- (output-\i.west);
          \node[above, font=\large] at ($(N\lay-\i)!0.5!(output-\i.west)$) {$q_0$};
          \node[above, font=\small] at (output-\i.north) {16.3};
        \fi
        \ifnum \i=2
          \node[output2] (output-\i) at ($(N\lay-\i)+(1.2,0)$) {A};
          \draw[connect] (N\lay-\i) -- (output-\i.west);
          \node[below, font=\large] at ($(N\lay-\i)!0.5!(output-\i.west)$) {$q_0$};
          \node[below, font=\small] at (output-\i.south) {15.0};
        \fi
      }
    \fi
  }

              \pgfmathsetmacro{\vspace}{1.45}
              \pgfmathsetmacro{\hspace}{1.8}
              
              \node[above=3.75cm, font=\large\bfseries] at (output-1) {$t=1$};
              \node[above=3.75cm, font=\large\bfseries] at ($(output-1)+(\hspace,0)$) {$t=2$};
              \node[above=3.75cm, font=\large\bfseries] at ($(output-1)+(2*\hspace,0)$) {$t=3$};
              \node[above=3.75cm, font=\large\bfseries] at ($(output-1)+(3*\hspace,0)$) {$t=4$};
              \node[above=3.75cm, font=\large\bfseries] at ($(output-1)+(4*\hspace,0)$) {$t=5$};
              \node[below=5cm, font=\large\bfseries] at ($(output-1)+(0,0)$) {$t=5$};
              \node[below=5cm, font=\large\bfseries] at ($(output-1)+(\hspace,0)$) {$t=6$};
              \node[below=5cm, font=\large\bfseries] at ($(output-1)+(2*\hspace,0)$) {$t=7$};
              \node[below=5cm, font=\large\bfseries] at ($(output-1)+(3*\hspace,0)$) {$t=8$};
              \node[below=5cm, font=\large\bfseries] at ($(output-1)+(4*\hspace,0)$) {$t=9$};

              \node[below=3.5cm, font=\large\bfseries] at (output-1) {$\alpha=0.58$};
              \node[below=3.5cm, font=\large\bfseries] at ($(output-1)+(\hspace,0)$) {$\alpha=0.62$};
              \node[below=3.5cm, font=\large\bfseries] at ($(output-1)+(2*\hspace,0)$) {$\alpha_{coffee}=0.57$};
              \node[below=4cm, font=\large\bfseries] at ($(output-1)+(2*\hspace,0)$) {$\alpha_{lazy}=0.68$};
              \node[below=3.5cm, font=\large\bfseries] at ($(output-1)+(3*\hspace,0)$) {$\alpha=0.62$};
              \node[below=3.5cm, font=\large\bfseries] at ($(output-1)+(4*\hspace,0)$) {$\alpha=0.70$};
              \node[below=14cm, font=\large\bfseries] at ($(output-1)+(0,0)$) {$\alpha=0.70$};
              \node[below=14cm, font=\large\bfseries] at ($(output-1)+(\hspace,0)$) {$\alpha=0.81$};
              \node[below=14cm, font=\large\bfseries] at ($(output-1)+(2*\hspace,0)$) {$\alpha_{cat}=0.75$};
              \node[below=14.5cm, font=\large\bfseries] at ($(output-1)+(2*\hspace,0)$) {$\alpha_{the}=1.00$};
              \node[below=14cm, font=\large\bfseries] at ($(output-1)+(3*\hspace,0)$) {$\alpha_{'s}=1.00$};
              \node[below=14.5cm, font=\large\bfseries] at ($(output-1)+(3*\hspace,0)$) {$\alpha_{toy}=0.62$};
              \node[below=14cm, font=\large\bfseries] at ($(output-1)+(4*\hspace,0)$) {$\alpha=1.00$};
              
              \node[wordbox, draw=myred, fill=white, 
                    label={[label distance=-2pt] above:{\small {16.8 $\rightarrow -\infty$}}}] 
                    (the-cat) at ($(output-1)+(\hspace,2*\vspace)$) {cat};
              \node[wordbox, draw=mygreen, fill=mygreen!25, 
                    label={[label distance=-2pt] above:{\small {14.2}}}] 
                    (the-coffee) at ($(output-1)+(\hspace,\vspace)$) {coffee};
              \node[wordbox, draw=mygreen, fill=white, 
                    label={[label distance=-2pt] above:{\small {13.2}}}] 
                    (the-lazy) at ($(output-1)+(\hspace,0)$) {lazy};

              \draw[connect, draw=myred] (output-1.east) -- (the-cat.west);
              \node[above, font=\large] at ($(output-1.east)!0.5!(the-cat.west)$) {$q_1$};

              \draw[connect, draw=mygreen] (output-1.east) -- (the-coffee.west);
              \node[above, font=\large] at ($(output-1.east)!0.5!(the-coffee.west)$) {$q_2$};

              \draw[connect, draw=mygreen] (output-1.east) -- (the-lazy.west);
              
              \node[wordbox, draw=myred, fill=white, 
                    label={[label distance=-2pt] above:{\small {16.9 $\rightarrow -\infty$}}}] 
                    (a-cat) at ($(output-1)+(\hspace,-\vspace)$) {cat};
              \node[wordbox, draw=myred, fill=white, 
                    label={[label distance=-2pt] above:{\small {14.1 $\rightarrow -\infty$}}}] 
                    (a-toy) at ($(output-1)+(\hspace,-2*\vspace)$) {toy};

              \draw[connect, draw=myred] (output-2.east) -- (a-cat.west);
              \node[above, font=\large] at ($(output-2.east)!0.5!(a-cat.west)$) {$q_1$};

              \draw[connect, draw=myred] (output-2.east) -- (a-toy.west);
              \node[above, font=\large] at ($(output-2.east)!0.5!(a-toy.west)$) {$q_1$};

              \node[wordbox, draw=mygreen, fill=mygreen!25, 
                    label={[label distance=-2pt] above:{\small 15.8}}]
                    (coffee-spilled) at ($(the-coffee)+(\hspace, \vspace)$) {spilled};

              \node[wordbox, draw=mygreen, 
                    label={[label distance=-2pt] above:{\small 15.6}}]
                    (coffee-was) at ($(the-coffee)+(\hspace, 0)$) {was};

              \draw[connect, draw=mygreen] (the-coffee.east) -- (coffee-spilled.west);
              \draw[connect, draw=mygreen] (the-coffee.east) -- (coffee-was.west);

              \node[wordbox, draw=myred, fill=white, 
                    label={[label distance=-2pt] above:{\small {18.5 $\rightarrow -\infty$}}}]
                    (lazy-cat) at ($(the-lazy)+(\hspace, -\vspace)$) {cat};

              \node[wordbox, draw=myred, fill=white, 
                    label={[label distance=-2pt] above:{\small {14.5}}}]
                    (lazy-owner) at ($(the-lazy)+(\hspace, -2*\vspace)$) {owner};

              \draw[connect, draw=myred] (the-lazy.east) -- (lazy-cat.west);
              \node[above, font=\large] at ($(the-lazy.east)!0.5!(lazy-cat.west)$) {$q_1$};

              \draw[connect, draw=myred] (the-lazy.east) -- (lazy-owner.west);

              \node[wordbox, draw=mygreen, fill=mygreen!25, 
                    label={[label distance=-2pt] above:{\small 19.8}}]
                    (spilled-on) at ($(coffee-spilled)+(\hspace,0)$) {on};

              \node[wordbox, draw=myred, fill=white, 
                    label={[label distance=-2pt] above:{\small 18.1}}]
                    (spilled-over) at ($(coffee-spilled)+(\hspace,-\vspace)$) {over};

              \draw[connect, draw=mygreen] (coffee-spilled.east) -- (spilled-on.west);
              \draw[connect, draw=myred] (coffee-spilled.east) -- (spilled-over.west);

              \node[wordbox, draw=mygreen, fill=white, 
                    label={[label distance=-2pt] above:{\small 18.1}}]
                    (was-spilled) at ($(coffee-was)+(\hspace,-2*\vspace)$) {spilled};

              \node[wordbox, draw=myred, fill=white, 
                    label={[label distance=-2pt] above:{\small 15.8}}]
                    (was-on) at ($(coffee-was)+(\hspace,-3*\vspace)$) {on};

              \draw[connect, draw=mygreen] (coffee-was.east) -- (was-spilled.west);
              \draw[connect, draw=myred] (coffee-was.east) -- (was-on.west);

              \node[wordbox, draw=mygreen, fill=mygreen!25, 
                    label={[label distance=-2pt] above:{\small 22.6}}]
                    (on-the) at ($(spilled-on)+(\hspace,0)$) {the};

              \node[wordbox, draw=myred, fill=white, 
                    label={[label distance=-2pt] above:{\small 19.0}}]
                    (on-a) at ($(spilled-on)+(\hspace,-\vspace)$) {a};

              \draw[connect, draw=mygreen] (spilled-on.east) -- (on-the.west);
              \draw[connect, draw=myred] (spilled-on.east) -- (on-a.west);

              \node[wordbox, draw=mygreen, fill=white, 
                    label={[label distance=-2pt] above:{\small 20.3}}]
                    (spilled-on) at ($(was-spilled)+(\hspace,0)$) {on};

              \node[wordbox, draw=myred, fill=white, 
                    label={[label distance=-2pt] above:{\small 19.0}}]
                    (spilled-by) at ($(was-spilled)+(\hspace,-\vspace)$) {by};

              \draw[connect, draw=mygreen] (was-spilled.east) -- (spilled-on.west);
              \draw[connect, draw=myred] (was-spilled.east) -- (spilled-by.west);
              \node[wordbox, draw=mygreen, fill=mygreen!25, 
                    label={[label distance=-2pt] above:{\small 22.6}}]
                    (on-the) at ($(output-1)+(0,-5*\vspace)$) {the};

              \node[wordbox, draw=myred, fill=white, 
                    label={[label distance=-2pt] above:{\small 19.0}}]
                    (on-a) at ($(output-1)+(0,-6*\vspace)$) {a};

              \node[wordbox, draw=mygreen, fill=white, 
                    label={[label distance=-2pt] above:{\small 20.3}}]
                    (spilled-on) at ($(output-1)+(0,-8*\vspace)$) {on};

              \node[wordbox, draw=myred, fill=white, 
                    label={[label distance=-2pt] above:{\small 19.0}}]
                    (spilled-by) at ($(output-1)+(0,-9*\vspace)$) {by};

              \node[wordbox, draw=mygreen, fill=mygreen!25, 
                    label={[label distance=-2pt] above:{\small 17.8}}]
                    (the-cat) at ($(on-the)+(\hspace,0)$) {cat};

              \node[wordbox, draw=myred, fill=white, 
                    label={[label distance=-2pt] above:{\small 16.3 $\rightarrow -\infty$}}]
                    (the-toy) at ($(on-the)+(\hspace,-\vspace)$) {toy};

              \draw[connect, draw=mygreen] (on-the.east) -- (the-cat.west);
              \node[above, font=\large] at ($(on-the.east)!0.5!(the-cat.west)$) {$q_3$};
              \draw[connect, draw=myred] (on-the.east) -- (the-toy.west);
              \node[above, font=\large] at ($(on-the.east)!0.5!(the-toy.west)$) {$q_1$};

              \node[wordbox, draw=mygreen, fill=white, 
                    label={[label distance=-2pt] above:{\small 22.5}}]
                    (on-the) at ($(on-the)+(\hspace,-3*\vspace)$) {the};

              \node[wordbox, draw=myred, fill=white, 
                    label={[label distance=-2pt] above:{\small 16.1}}]
                    (on-a) at ($(on-a)+(\hspace,-3*\vspace)$) {a};

              \draw[connect, draw=mygreen] (spilled-on.east) -- (on-the.west);
              \draw[connect, draw=myred] (spilled-on.east) -- (on-a.west);

              \node[wordbox, draw=mygreen, fill=mygreen!25, 
                    label={[label distance=-2pt] above:{\small 17.8}}]
                    (cat-'s) at ($(the-cat)+(\hspace,0)$) {'s};

              \node[wordbox, draw=myred, fill=white, 
                    label={[label distance=-2pt] above:{\small 16.7}}]
                    (cat-and) at ($(the-cat)+(\hspace,-\vspace)$) {and};
                    
              \node[wordbox, draw=mygreen, fill=white, 
                    label={[label distance=-2pt] above:{\small 14.0 $\rightarrow$ 17.1}}]
                    (cat-toy) at ($(the-cat)+(\hspace,-2*\vspace)$) {toy};

              \draw[connect, draw=mygreen] (the-cat.east) -- (cat-'s.west);
              \draw[connect, draw=myred] (the-cat.east) -- (cat-and.west);
              \draw[connect, draw=mygreen] (the-cat.east) -- (cat-toy.west);
              \node[above, font=\large] at ($(the-cat.east)!0.5!(cat-toy.west)$) {$q_4$};

              \node[wordbox, draw=myred, fill=white, 
                    label={[label distance=-2pt] above:{\small 17.7 $\rightarrow -\infty$}}]
                    (the-floor) at ($(the-cat)+(\hspace,-3*\vspace)$) {floor};

              \node[wordbox, draw=myred, fill=white, 
                    label={[label distance=-2pt] above:{\small 16.7 $\rightarrow -\infty$}}]
                    (the-toy) at ($(the-cat)+(\hspace,-4*\vspace)$) {toy};

              \draw[connect, draw=myred] (on-the.east) -- (the-floor.west);
              \node[above, font=\large] at ($(on-the.east)!0.5!(the-floor.west)$) {$q_1$};
              \draw[connect, draw=myred] (on-the.east) -- (the-toy.west);
              \node[above, font=\large] at ($(on-the.east)!0.5!(the-toy.west)$) {$q_1$};

              \node[wordbox, draw=mygreen, fill=mygreen!25, 
                    label={[label distance=-2pt] above:{\small 20.2}}]
                    ('s-toy) at ($(cat-'s)+(\hspace,0)$) {toy};

              \node[wordbox, draw=myred, fill=white, 
                    label={[label distance=-2pt] above:{\small 16.2}}]
                    ('s-favorite) at ($(cat-'s)+(\hspace,-\vspace)$) {favorite};
                    
              \node[wordbox, draw=mygreen, fill=white, 
                    label={[label distance=-2pt] above:{\small 15.5 $\rightarrow$ 20.2}}]
                    ('s-toys) at ($(cat-'s)+(\hspace,-2*\vspace)$) {toys};

              \draw[connect, draw=mygreen] (cat-'s.east) -- ('s-toy.west);
              \node[above, font=\large] at ($(cat-'s.east)!0.5!('s-toy.west)$) {$q_4$};
              \draw[connect, draw=myred] (cat-'s.east) -- ('s-favorite.west);
              \draw[connect, draw=mygreen] (cat-'s.east) -- ('s-toys.west);
              \node[above, font=\large] at ($(cat-'s.east)!0.5!('s-toys.west)$) {$q_4$};

              \node[wordbox, draw=myred, fill=white, 
                    label={[label distance=-2pt] above:{\small 16.5}}]
                    (toy-dot) at ($(cat-'s)+(\hspace,-3*\vspace)$) {.};

              \node[wordbox, draw=myred, fill=white, 
                    label={[label distance=-2pt] above:{\small 15.5}}]
                     (toy-comma) at ($(cat-'s)+(\hspace,-4*\vspace)$) {,};

              \draw[connect, draw=myred] (cat-toy.east) -- (toy-dot.west);
              \draw[connect, draw=myred] (cat-toy.east) -- (toy-comma.west);

              \node[wordbox, draw=myred, fill=white, 
                    label={[label distance=-2pt] above:{\small 15.6 $\rightarrow -\infty$}}]
                    (toy-dot) at ($('s-toy)+(\hspace,0)$) {.};

              \node[wordbox, draw=myred, fill=white, 
                    label={[label distance=-2pt] above:{\small 14.3 $\rightarrow -\infty$}}]
                    (toy-and) at ($('s-toy)+(\hspace,-\vspace)$) {and};
                    
              \node[wordbox, draw=mygreen, fill=mygreen!25, 
                    label={[label distance=-2pt] above:{\small 11.3}}]
                    (toy-eos) at ($('s-toy)+(\hspace,-2*\vspace)$) {eos};

              \draw[connect, draw=myred] ('s-toy.east) -- (toy-dot.west);
              \node[above, font=\large] at ($('s-toy.east)!0.5!(toy-dot.west)$) {$q_1$};
              \draw[connect, draw=myred] ('s-toy.east) -- (toy-and.west);
              \node[above, font=\large] at ($('s-toy.east)!0.5!(toy-and.west)$) {$q_1$};
              \draw[connect, draw=mygreen] ('s-toy.east) -- (toy-eos.west);
              \node[above, font=\large] at ($('s-toy.east)!0.5!(toy-eos.west)$) {$q_5$};

              \node[wordbox, draw=myred, fill=white, 
                    label={[label distance=-2pt] above:{\small 16.9 $\rightarrow -\infty$}}]
                    (toys-dot) at ($('s-toy)+(\hspace,-3*\vspace)$) {.};

              \node[wordbox, draw=myred, fill=white, 
                    label={[label distance=-2pt] above:{\small 15.9 $\rightarrow -\infty$}}]
                     (toys-comma) at ($('s-toy)+(\hspace,-4*\vspace)$) {,};

              \node[wordbox, draw=mygreen, fill=white, 
                    label={[label distance=-2pt] above:{\small 10.8}}]
                     (toys-eos) at ($('s-toy)+(\hspace,-5*\vspace)$) {eos};

              \draw[connect, draw=myred] ('s-toys.east) -- (toys-dot.west);
              \node[above, font=\large] at ($('s-toys.east)!0.5!(toys-dot.west)$) {$q_1$};
              \draw[connect, draw=myred] ('s-toys.east) -- (toys-comma.west);
              \node[above, font=\large] at ($('s-toys.east)!0.5!(toys-comma.west)$) {$q_1$};
              \draw[connect, draw=mygreen] ('s-toys.east) -- (toys-eos.west);
              \node[above, font=\large] at ($('s-toys.east)!0.5!(toys-eos.west)$) {$q_5$};

            \end{tikzpicture}%
        }
    \end{minipage}
    \caption{\textbf{(Cont.'ed from Figure~\ref{fig:dfa_and_table})}. Real generation using \model{}, with 2 beams, $\alpha_{\min}$=0.25, $\gamma$=1, and T=9. The selected sequence is shaded in \textcolor{kg}{green}. Above each token are the relative logits, before and after modification, while the arcs represent the state transitions inside the automaton. The same model without \model{} generates ``{\sl The cat was playing with a toy while the}'', breaking the order constraints on word generation (cat before coffee). After generating ``The'', the model prefers ``cat'' at $t=2$, violating the requirements. }
    \label{fig:beamsearch}
\end{figure}

\section{Theoretical Analysis}\label{sec:theor}

\begin{table}[t]
\centering
\caption{Performance on CommonGen. \model{} is run with: number of beams=64, $\alpha$=0.5 and $\gamma$=1, for both supervised and unsupervised model. Ctrl-G uses the HMM with 32768 hidden states the authors provided in their paper.  Best scores are in bold, while second bests are 
underlined.}
\begin{tabular}{llccccc}
\toprule
\textbf{Method} & & \textbf{ROUGE-L} & \textbf{BLEU-4} & \textbf{CIDEr} & \textbf{SPICE} & \textbf{Coverage} \\
\midrule
\rowcolor{gray!15}
\multicolumn{7}{l}{\textit{Unsupervised}} \\
InsNet (\cite{insnet}) & & - & 18.7 & - & - & \textbf{100.0} \\
NADO (\cite{meng2022_nado}) & & - & 26.2 & - & - & \underline{96.1} \\
GeLaTo (\cite{zhang2023gelato}) & & 44.3 & 30.3 & 15.6 & \underline{30.2} & \textbf{100.0} \\
Ctrl-G (\cite{zhang2024ctrlg}) & & \underline{45.2} & \underline{32.1} & \textbf{16.0} & \textbf{30.8} & \textbf{100.0} \\
Outlines (\cite{willard2023efficient}) & & 31.4 & 18.7 & 2.8 & 19.7 & 80.6 \\
Guidance (\cite{lundberg2024guidance}) & & 19.4 & 9.2 & 3.3 & 15.3 & 92.1 \\
\textbf{\model{}} & & \textbf{48.7} & \textbf{47.9} & \underline{15.7} & 29.3 & \textbf{100.0} \\
\midrule
\rowcolor{gray!15}
\multicolumn{7}{l}{\textit{Supervised}} \\
NADO & & 44.4 & 30.8 & 16.1 & \underline{32.0} & 88.8 \\
GeLaTo & & \underline{46.2} & \underline{34.0} & \textbf{17.2} & \textbf{32.2} & \textbf{100.0} \\
Outlines & & 29.9 & 16.3 & 2.7 & 18.5 & 76.5 \\
Guidance & & 21.3 & 9.9 & 3.0 & 15.3 & \underline{89.0} \\
\textbf{\model{}} & & \textbf{49.7} & \textbf{49.5} & \underline{16.2} & 29.7 & \textbf{100.0} \\
\bottomrule
\end{tabular}
\end{table}
\label{metricscomparison}
\vspace{1em}
\begin{table}
\caption{Runtimes comparison across different beam sizes ($\pm$ standard error), with max length $T=32$. \model{} is used with $\alpha$=0.5 and $\gamma$=1. Ctrl-G uses the HMM with 32768 hidden states. Both use fine-tuned GPT2.}
\begin{tabular}{lcccccc}
\toprule
\textbf{Method} & \textbf{4} & \textbf{8} & \textbf{16} & \textbf{32} & \textbf{64} & \textbf{128} \\
\midrule
Ctrl-G & 3.49 $\pm$ {\scriptsize 0.011} & 3.70 $\pm$ {\scriptsize 0.014} & 4.25 $\pm$ {\scriptsize 0.024} & 5.43 $\pm$ {\scriptsize 0.053} & 8.05 $\pm$ {\scriptsize 0.102} & 12.01 $\pm$ {\scriptsize 0.161} \\
\textbf{\model{}} & \textbf{0.74} $\pm$ {\scriptsize 0.007} & \textbf{0.81} $\pm$ {\scriptsize 0.006} & \textbf{1.33} $\pm$ {\scriptsize 0.010} & \textbf{2.22} $\pm$ {\scriptsize 0.019} & \textbf{4.42} $\pm$ {\scriptsize 0.041} & \textbf{9.45} $\pm$ {\scriptsize 0.084} \\
\bottomrule
\end{tabular}
\label{timecomparison}
\end{table}

In this section we provide formal statements concerning the computational complexity of our inference tasks, soundness of our Automata-guided inference method, and comparison with existing works. Proofs of the stated theorems are reported in the Appendix.
\subsection{Soundness and Complexity of Automata-guided Beam Search}

We recall that our problem (Eq.~\ref{eq:MAP}) is a constrained Maximum A Posteriori (MAP) inference task over the outputs of an autoregressive model. MAP inference with logical constraints is NP-hard in general, even for Bayesian networks of treewidth one~\citep{roth96}. We therefore design a structured approximate method that is tractable in practice yet guarantees soundness.

\begin{theorem}[Soundness of Automata-guided Beam Search with Ramping \(\alpha\)]
\label{theorem:soundness}
Let \( \phi \) be a constraint over a sequence of concepts compiled into a DFA $\mathcal{A}$, and let $q_0$ be the initial state of $\mathcal{A}$. If  
(i) at step \(0\), $q_0$ is within distance $d_0 \leq T$ of an accepting state, where $T$ is the maximum sequence length;  
(ii) at each step \(t\), the scheduler ramps \(\alpha^{i}_t\) and  transitions to states with $d_{q_{t+1}} > T-t$ are pruned, then the algorithm returns a sequence \(\hat{x}_{1:T}\) such that $\hat{x}_{1:T} \models \phi$, whenever such a sequence exists.
\end{theorem}

This guarantees that Automata-guided Beam Search never discards all feasible paths: if a satisfying sequence exists, it will be produced. 
\paragraph{Complexity}  
For a fixed-size DFA, Automata-guided Beam Search runs in polynomial time with respect to sequence length \(T\), beam width \(k\), and output space size \(|\mathcal{X}|\). Each decoding step considers at most \(k \cdot |\mathcal{X}|\) candidates, applies DFA filtering and scoring, and retains the top \(k\). The overall time complexity is therefore
\[
O(T \cdot k \cdot |\mathcal{X}|).
\]
\subsection{Theoretical Comparison with the State of the Art}
 
One method to address this problem, called Ctrl-G, and proposed by \cite{zhang2024ctrlg}, is to distill the model $f_{\theta}$ into Markov models to hopefully obtain a tractable representation. Unfortunately, the next result shows that even in the simplest case, the constrained MAP inference task is NP-hard.

\begin{theorem}[Complexity of Markov Chains MAP Inference with Unary Constraint]
\label{theorem:MCMAP}
MAP inference on Markov Chains with a unary equality constraint is NP-hard.
\end{theorem}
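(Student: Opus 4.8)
The plan is to prove NP-hardness by a polynomial-time reduction from the \emph{Directed Hamiltonian Path} problem, exploiting the fact that the hardness of the constrained MAP problem in Eq.~(\ref{eq:MAP}) already surfaces at the level of \emph{feasibility}: deciding whether the constrained feasible set is nonempty (equivalently, whether the optimal objective is finite rather than $+\infty$) already suffices, since any procedure solving the MAP optimization also answers this question. I will therefore build, from a digraph $G=(V,E)$ with $|V|=n$, a Markov chain together with unary equality (cardinality) constraints whose positive-probability feasible sequences are in bijection with the Hamiltonian paths of $G$.

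For the construction I take the state space (output alphabet) to be $\mathcal{X}=V$, set the initial distribution to be uniform over $V$, and define the transition law by $p_\theta(v\mid u)>0$ iff $(u,v)\in E$, assigning equal positive probability to all out-edges of each vertex. I fix the horizon to $T=n$ and impose, for every vertex $v\in V$, the \emph{unary equality constraint} $\#\{t : x_t = v\}=1$, i.e.\ the symbol $v$ must occur exactly once in the length-$n$ trace. Each such constraint concerns a single symbol and fixes its count, so it is of the claimed unary-equality form, and the whole instance is clearly constructible in time polynomial in $|V|+|E|$.

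For correctness I will argue both directions. Any sequence $x_{1:n}$ with $p_\theta(x_{1:n})>0$ is, by construction, a walk $x_1\to x_2\to\cdots\to x_n$ that follows edges of $G$; imposing $\#\{t:x_t=v\}=1$ for every $v$ forces the $n$ symbols of the length-$n$ trace to be pairwise distinct, hence a permutation of $V$ traversing only edges of $G$ — exactly a Hamiltonian path. Conversely, any Hamiltonian path yields a length-$n$ sequence of strictly positive probability satisfying every cardinality constraint. Hence the constrained feasible set is nonempty — equivalently, the optimal value of $-\sum_t \log p_\theta(x_t\mid x_{<t})$ is finite rather than $+\infty$ — iff $G$ admits a Hamiltonian path, and NP-hardness of the MAP problem follows.

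The main obstacle, and the point the reduction is designed around, is that a Markov chain cannot, with polynomially many states, remember which vertices a partial walk has already visited, so it cannot by itself rule out revisits and enforce a Hamiltonian (rather than merely edge-respecting) walk. The key idea is that this ``visited-set'' bookkeeping is off-loaded entirely onto the unary equality constraints: the count-exactly-once requirements, together with the tight horizon $T=n$, are what collapse the set of positive-probability walks down to Hamiltonian paths. A secondary point to verify is that equality ($=1$) and ``appears at least once'' coincide in this instance, since $n$ symbols over a length-$n$ trace force exactly-one by pigeonhole; this makes the result robust to whether the constraint language offers equality or lower-bound cardinality. The hardness thus holds already for this minimal constraint class, which in particular shows that distilling $f_\theta$ into a Markov model, as in Ctrl-G, does not by itself render constrained MAP inference tractable.
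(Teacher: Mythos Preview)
Your reduction is sound for the problem you formulate, but it does not prove the theorem as stated. In the paper, ``a unary equality constraint'' means a \emph{single} constraint of the form $X_\beta = t$: the state at one fixed time step $\beta$ is pinned to a prescribed value (the discussion immediately after the theorem makes this explicit: ``a constraint of the form $y_\beta = t$''). That is a constraint on one variable --- ``unary'' in the standard constraint-satisfaction sense --- and there is only one of them.

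Your construction instead imposes $n = |V|$ constraints of the form $\#\{t : x_t = v\} = 1$, one per vertex. These are \emph{global cardinality} constraints over the whole trace, not positional equalities, and you use many of them rather than one. Cardinality constraints are far more expressive: with them you can directly encode ``all-distinct'', which is precisely what makes your Hamiltonian-path reduction immediate. The point of the theorem, however, is that hardness already appears with a single positional equality --- a much weaker constraint language --- and proving that requires a different source problem and a reduction where the hardness lives in the optimization rather than in feasibility.

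The paper achieves this by reducing from the Constrained Shortest Path problem with uniform arc lengths (find a minimum-cost walk of exactly $\beta$ steps from $s$ to $t$ in a weighted digraph), which is NP-hard. The initial distribution fixes $X_1 = s$, the one unary constraint is $X_\beta = t$, and transition probabilities are chosen so that $-\log P(x_{1:\beta})$ equals the path cost up to an additive constant; per-vertex dummy absorbing states are introduced to normalize each row while ensuring that any walk leaving $V$ can never again satisfy $X_\beta = t$. Thus a single positional equality suffices, and the hardness comes from minimizing cost among feasible walks, not from deciding whether a feasible walk exists.
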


Thus, while HMMs admit polynomial-time MAP inference via Viterbi under dense emissions, LLM outputs are typically sparse, so HMM approximations~\citep{zhang2024ctrlg} might not guarantee tractable constrained decoding. Finally, we note that other Automata-based steering methods, including industry standards like Guidance \citep{lundberg2024guidance} and Outlines \citep{willard2023efficient}, are shown to be unsound through experimental results in the following section.

\section{Experimental Evaluation}\label{sec:exper}

We report on experiments assessing the benefits of our approach applied to three tasks: image sequence classification and  text generation, involving temporal constraints (expressed as \ltlf formulas), and text infilling, involving structural constraints (expressed as regular expressions). To produce the automata representing the \ltlf formulae, we exploit MONA ~\footnote{\href{https://www.brics.dk/mona/}{https://www.brics.dk/mona/}, copyright 1997-2020 Aarhus University.} and LTLf2DFA ~\footnote{\href{http://ltlf2dfa.diag.uniroma1.it/}{http://ltlf2dfa.diag.uniroma1.it/}, License LGPLv3+}~\citep{Fuggitti2019}. For regular expressions, we used FAdo~\citep{reis2002fado}. All experiments ran on a machine with 
an Intel® Xeon® 20 cores and NVidia L40S GPU with 48 GB of VRAM.
\subsection{Constrained Image Sequence Classification}
\begin{wrapfigure}{r}{0.49\textwidth}
\vspace{-0.4cm}
\includegraphics[width=0.49\textwidth]{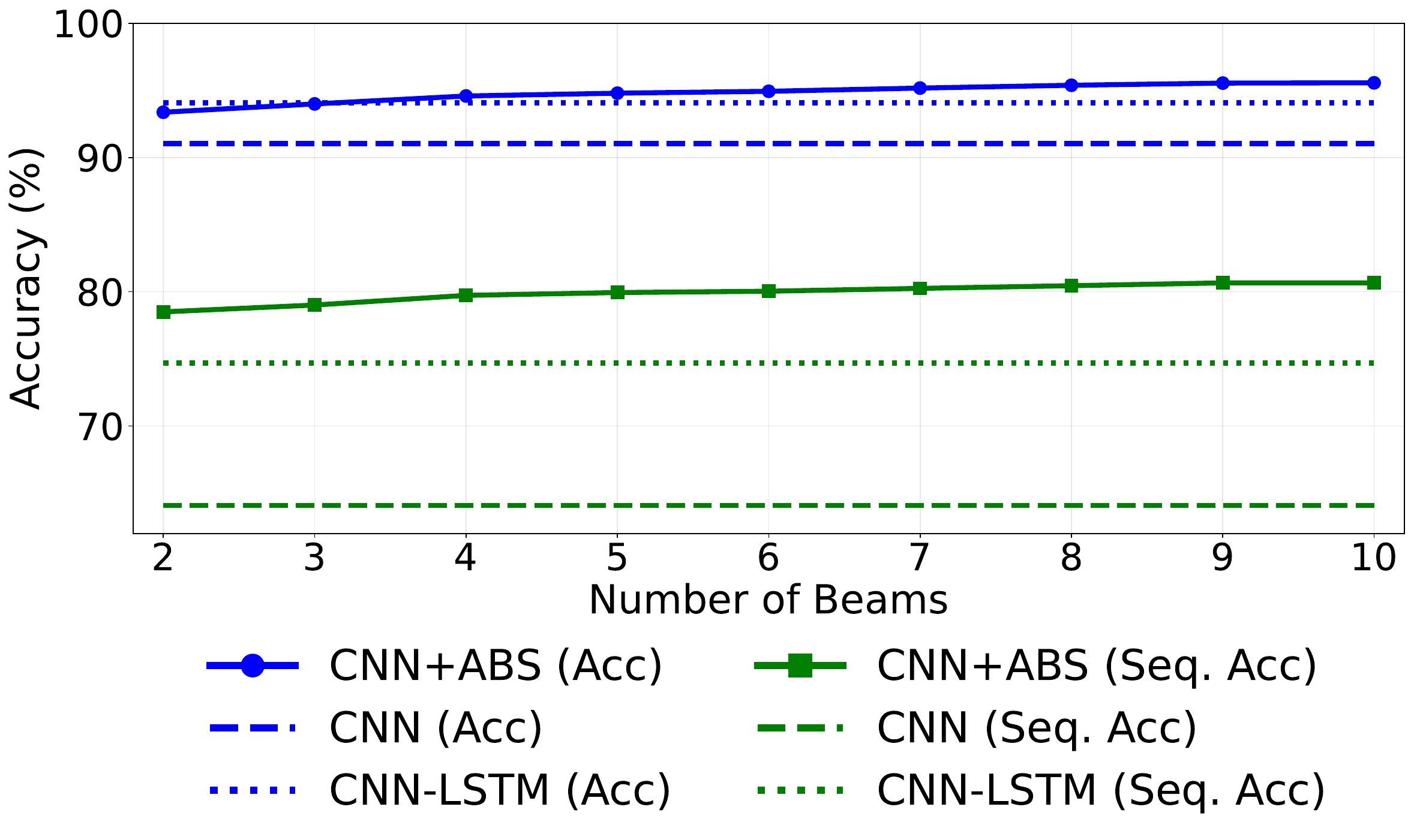}
\vspace{-0.4cm}
\caption{Performance on Ordered Fashion MNIST.\vspace{-0.4cm}}
\label{fig:cnn}
\end{wrapfigure}
\eg{We first evaluate our method on a controlled sequential image classification task. Starting from}   
Fashion-MNIST\footnote{\href{https://github.com/zalandoresearch/fashion-mnist}{https://github.com/zalandoresearch/fashion-mnist}, MIT License}, 
we construct \emph{Ordered Fashion-MNIST}, in which images are arranged into sequences subject to \ltlf constraints we manually annotated (Appendix~\ref{ordered_fashion_mnist}). The constraints specify which clothing items cannot co-occur and enforce ordering constraints. For example, the sequence [trousers, t-shirt, sneakers, sandals] is not allowed as sneakers and sandals cannot be worn at the same time, but neither is the sequence [sandals, trousers, t-shirt] as sandals cannot be worn before trousers.\\\\\\

As a baseline, a frozen CNN trained on Fashion MNIST achieves 91.03\% accuracy and 64.09\% sequence accuracy (i.e., the ratio of sequences that are correctly classified) on Ordered Fashion-MNIST. As an additional baseline, we combined the frozen CNN with a trainable LSTM, which allows us to model sequential dependencies. In this setting, only the LSTM parameters are trained, while the CNN remains fixed. This CNN-LSTM sequence model achieves an accuracy of 94.07\% and a sequence accuracy of 74.69\%.
In contrast, combining the frozen CNN with \model{} (10 beams) yields 
95.56\% accuracy (+4.53 with respect to the CNN and +1.49 to the CNN-LSTM) and 80.66\% sequence accuracy (+16.57  with respect to the CNN and +5.97 to the CNN-LSTM), 
surpassing both baselines without additional training. 
Figure~\ref{fig:cnn} shows how performance improves with beam size.  
This experiment shows that our method replaces the need for such an additional recurrent module, avoiding extra training, while achieving better performance, especially in the sequence accuracy. Implementation details are provided in Appendix~\ref{ordered_fashion_mnist}.

\subsection{Constrained Text Generation}
We present two experiments on constrained text generation with LLMs. For both datasets, we computed the standard quality metrics in the literature: BLEU \citep{papineni2002bleu}, ROUGE \citep{lin-2004-rouge}, CIDEr \citep{cider} and SPICE \citep{spice}.

\textbf{CommonGen.} For the first experiment we use the CommonGen dataset ~\citep{Lin2020}, a standard evaluation task for constrained text generation released under MIT License. In this task, a set of common concepts is provided (e.g., {dog, frisbee, catch, throw}) and the goal for the LLM is to generate a coherent sentence describing a scenario using all the given concepts (e.g., “A man throws a frisbee, and his dog catches it”).
We compare our results with those obtained in relevant previous work, such as works of \citet{zhang2024ctrlg, zhang2023gelato, meng2022_nado}. In particular, we use the scores reported in the first two cited papers. Moreover, to ensure a fair comparison, we apply \model{}, Guidance and Outlines to the same models they employed: GPT2-large \citep{gpt2} (MIT License) as the unsupervised model, and GPT2-large fine-tuned by \citet{zhang2023gelato} on the training set of CommonGen as the supervised model. For this task we selected the best parameters on the training set and then applied these parameters to the validation set. 

The results in Table \ref{metricscomparison} show that, for both models, \model{} exceeds the current state of the art in two quality metrics (ROUGE-L and BLEU-4), is competitive in the remaining two (CIDEr and SPICE) and achieves 100\% constraint satisfaction (``Coverage''). 
Moreover, these experiments show that approaches such as Outlines and Guidance, which also rely on DFAs to guide generation, do not achieve 100\% coverage. This is because these methods prevent entering a deadlock state (i.e. the constraint is violated) but they lack a mechanism that guarantees to land in an accepting state (i.e. the constraint is satisfied) before reaching the model's maximum token limit. Importantly, as shown in Table \ref{timecomparison}, \model{} is  systematically faster than Ctrl-G -- the best competing method -- when using the same number of beams and maximum token generation length.

\textbf{Ordered CommonGen.}
In our second experiment, we introduce Ordered CommonGen, a variant of CommonGen with a temporal constraint that the generated sentence must include all provided concepts in a specified order. For example, given the ordered set {\textit{dog, frisbee, catch, throw}}, a valid output would be: “The dog eagerly chased the frisbee trying to catch it after its owner threw it.”. This ordering constraint increases task difficulty and makes the setting well-suited for evaluating temporal constraint satisfaction. 
The prompt adapted from CommonGen Lite, is provided in Appendix~\ref{prompt}.

We compare \model{} applied to LLAMA 3.1 8B with Guidance and Outlines applied to the same model, and with large OpenAI models: GPT-3.5, GPT-4, GPT-4o, and the o1 reasoning model~\cite{brown2020language,openai2024gpt4technicalreport,openai2024gpt4ocard,openai2024openaio1card}. Table~\ref{oderedcommongencomparison} shows that, despite using a much smaller and locally deployed model, \model{} outperforms the large models on two quality metrics (BLEU-4 and CIDEr) and remains competitive on the other two metrics. It is also the only approach that achieves 100\% constraint satisfaction, consistent with its sound design, whereas GPT-3.5, GPT-4, and GPT-4o have a significant number of violations and o1 achieves 99.8\% coverage. Compared to the base LLAMA 3.1 8B model, \model{} achieves a substantial improvement in qualitative metrics (R: +7.8, B: +3.0, C: +1.3, S: +3.7, and Cov.: +55.6). While Guidance and Outlines also improve performance on most metrics, they do not reach the improvements made by \model{} (except Guidance on SPICE metric only) and they yield significantly lower coverage than the various GPT models and \model{}.

\begin{table}[t]
\centering
\caption{Performances on Ordered CommonGen. \model{} use LLAMA 3.1 8B with number of beams=64, $\alpha$=0.25, $\gamma$=1. The best scores are in bold, while the second best are underlined.}
\begin{tabular}{llccccc}
\toprule
\textbf{Method/Model} & & \textbf{ROUGE-L} & \textbf{BLEU-4} & \textbf{CIDEr} & \textbf{SPICE} & \textbf{Coverage} \\
\midrule
GPT 3.5 & & 42.2 & 24.9 & \underline{13.5} & 40.8 & 64.5 \\
GPT 4 & & 42.2 & 23.7 & 12.4 & 40.7 & 83.3 \\
GPT 4o & & \underline{42.7} & \underline{25.2} & 13.4 & 42.2 & 73.9 \\
o1 & & \textbf{43.1} & 24.3 & 12.9 & 41.3 & \underline{99.8} \\
LLAMA 3.1 8B & & 32.2 & 22.6 & 12.3 & 38.0 & 44.4 \\
LLAMA 3.1 8B + Outlines & & 33.5 & 21.8 & 10.4 & 40.5 & 49.9 \\
LLAMA 3.1 8B + Guidance & & 38.0 & 22.8 & 10.4 & \textbf{43.0} & 54.0 \\
LLAMA 3.1 8B + \textbf{\model{}} & & 40.0 & \textbf{25.6} & \textbf{13.6} & \underline{41.7} & \textbf{100.0} \\
\bottomrule
\end{tabular}
\label{oderedcommongencomparison}
\end{table}

\subsection{Text Infilling}
We also evaluate \model{} on a text infilling benchmark introduced by~\cite{ilm}, which is based on the ROC Stories corpus (\cite{rocstories}). Each test example consists of a short story with masked segments, and the task is to fill in these masks. For instance:
\textit{"My day on <|infill\_word|> this week went as expected. My family and I attended Church. <|infill\_sentence|>"}. We straightforwardly express these text infilling tasks with regular expressions.

We use the GPT-2 small checkpoint released by \cite{ilm} as part of their ILM method, as the base model for \model{}, Guidance and Outlines. Following their experimental protocol, we generate three test sets from the original one, with masking ratios of 10\%, 20\%, and 30\%. To evaluate the completed stories, we report BLEU-4 and ROUGE-L and Coverage. As shown in Table~\ref{textinfilling}, \model{} outperforms ILM, Guidance, and Outline. \model{} is again the only method to achieve perfect coverage.

\begin{table}[t]
\centering
\setlength{\tabcolsep}{4.5pt}
\caption{Performances on Text Infilling. \model{} is run with: with number of beams=16, $\alpha$=0.5, $\gamma$=1. Best scores are in bold, while second bests are underlined.}
\begin{tabular}{lccccccccc}
\toprule
\textbf{Method} & 
\multicolumn{3}{c}{\textbf{10\%}} & 
\multicolumn{3}{c}{\textbf{20\%}} & 
\multicolumn{3}{c}{\textbf{30\%}} \\
\cmidrule(lr){2-4} \cmidrule(lr){5-7} \cmidrule(lr){8-10}
 & \textbf{Rouge-L} & \textbf{Bleu-4} & \textbf{Cov.} 
 & \textbf{Rouge-L} & \textbf{Bleu-4} & \textbf{Cov.} 
 & \textbf{Rouge-L} & \textbf{Bleu-4} & \textbf{Cov.} \\
\midrule
ILM        & 74.5 & \underline{71.7} & 86.5 & 63.2 & \underline{59.8} & 49.7 & 51.1 & \underline{46.5} & 23.0 \\
Guidance & \underline{80.5} & 70.8 & 68.0 & \underline{67.9} & 56.1 & 56.0 & \underline{55.6} & 42.4 & 36.0\\
Outlines & 71.2 & 56.4 & \underline{94.0} & 55.7 & 38.8 & \underline{80.0} & 43.6 & 28.6 & \underline{80.0}\\
\textbf{\model{}} & \textbf{85.4} & \textbf{79.1} & \textbf{100.0} & \textbf{73.3} & \textbf{64.1} & \textbf{100.0} & \textbf{61.6} & \textbf{50.3} & \textbf{100.0} \\
\bottomrule
\end{tabular}
\label{textinfilling}
\end{table}

\section{Related Work}\label{sec:related}

\paragraph{Neurosymbolic AI.} Neurosymbolic AI~\cite{} is a growing filed of interest in machine learning due to the ability of its methods to reconcile neural perception with symbolic reasonings~\citep{raedt_survey,third_wave}. While the field is quite vast, the neurosymbolic solutions that are the closest to our method are those that try to incorporate symbolic background knowledge in the form of constraints into neural models~\citep{dash_informed_ml_review,giunchiglia2022ijcai}. These methods can be divided in two groups: those that give a probabilistic semantics to the constraints~\citep{manhaeve2018deepproblog,xu2018semantic,krieken2023anesi} and those that give a fuzzy semantics~\citep{LTN,giunchiglia2024ccn+,diligenti2012SBR}. Another way to categorize the methods in the field is by whether they incorporate the constraints in the loss function~\citep{LTN,xu2018semantic,fischer2019DL2,krieken2022_analyzing} or they change the final output space of the model~\citep{manhaeve2018deepproblog,giunchiglia2021jair,ahmed2022spl,pryor2023NeuPSL,hoernle2022multiplexnet,misino2023tvael}. The first can be used to nudge via the loss the network to satisfy the constraints, while the second can guarantee the constraints satisfaction. 

\paragraph{Constrained Text Generation.} The work done in constrained text generation has been developing on multiple orthogonal axes. \textit{1. Search based decoding.} These approaches act at inference time and constrain the beam search with logical constraints. For example,  in~\citep{Lu2021,Lu2022}, the authors decide which beams to expand taking into account not only the LLM's prediction but also the degree of satisfaction of the constraints reported for every sequence. 
\textit{2. Auxiliary Classifier Guidance.} The works that follow this line aim to guide the base model with an auxiliary one.
GeDi~\citep{krause2021gedi}, FUDGE~\citep{Yang2021_fudge} and NADO~\citep{meng2022_nado} steer generation with class-conditional or token-level predictors. However, they provide no hard guarantees and might require task-specific supervision. On the contrary, GeLaTo~\citep{zhang2023gelato} provides the guarantee as it distills the base LLM into a hidden-Markov model, which computes the probability that the remaining suffix can still contain a set of keywords and then multiplies this value into the LLM's logits. Ctrl-G~\citep{zhang2024ctrlg} is the recent extension of GeLato, which allows for any constraint that can be compiled to a DFA (instead of propositional logic). \textit{3. Probabilistic Sampling} treats constraints as conditioning and samples from the posterior. This line was pioneered by~\citep{Miao19sampling} with Metropolis–Hastings token edits that provide unbiased samples under lexical constraints. This idea has then been further explored in the works of ~\citet{ahmed2025resampling,loula2025smc}.

\section{Conclusions}\label{sec:conclusions}

We introduced \model{}, a DFA-guided method to constrain the output of a Neural Network. Because we enforce specifications via a compiled DFA, any \emph{regular} constraint (e.g., regex or \ltlf) is supported, enabling both temporal and structural requirements, and thus broadening the range of case studies w.r.t.\ existing methods. Our algorithm is proved to be sound, while not optimal, and can easily be adapted to different applications. Comparison with the state of the art shows significant improvements in both efficiency and quality of the generated output. The introduced method can be used for multiple purposes, with a positive societal impact, like the detoxification of LLMs outputs. 

\bibliographystyle{unsrtnat}
\bibliography{main}

\begin{thebibliography}{48}
\providecommand{\natexlab}[1]{#1}
\providecommand{\url}[1]{\texttt{#1}}
\expandafter\ifx\csname urlstyle\endcsname\relax
  \providecommand{\doi}[1]{doi: #1}\else
  \providecommand{\doi}{doi: \begingroup \urlstyle{rm}\Url}\fi

\bibitem[Lu et~al.(2021)Lu, West, Zellers, Le~Bras, Bhagavatula, and
  Choi]{Lu2021}
Ximing Lu, Peter West, Rowan Zellers, Ronan Le~Bras, Chandra Bhagavatula, and
  Yejin Choi.
\newblock {N}euro{L}ogic decoding: (un)supervised neural text generation with
  predicate logic constraints.
\newblock In Kristina Toutanova, Anna Rumshisky, Luke Zettlemoyer, Dilek
  Hakkani-Tur, Iz~Beltagy, Steven Bethard, Ryan Cotterell, Tanmoy Chakraborty,
  and Yichao Zhou, editors, \emph{Proceedings of the 2021 Conference of the
  North American Chapter of the Association for Computational Linguistics:
  Human Language Technologies}, pages 4288--4299, Online, June 2021.
  Association for Computational Linguistics.
\newblock \doi{10.18653/v1/2021.naacl-main.339}.
\newblock URL \url{https://aclanthology.org/2021.naacl-main.339/}.

\bibitem[Lu et~al.(2022)Lu, Welleck, West, Jiang, Kasai, Khashabi, Le~Bras,
  Qin, Yu, Zellers, Smith, and Choi]{Lu2022}
Ximing Lu, Sean Welleck, Peter West, Liwei Jiang, Jungo Kasai, Daniel Khashabi,
  Ronan Le~Bras, Lianhui Qin, Youngjae Yu, Rowan Zellers, Noah~A. Smith, and
  Yejin Choi.
\newblock {N}euro{L}ogic a*esque decoding: Constrained text generation with
  lookahead heuristics.
\newblock In Marine Carpuat, Marie-Catherine de~Marneffe, and Ivan~Vladimir
  Meza~Ruiz, editors, \emph{Proceedings of the 2022 Conference of the North
  American Chapter of the Association for Computational Linguistics: Human
  Language Technologies}, pages 780--799, Seattle, United States, July 2022.
  Association for Computational Linguistics.
\newblock \doi{10.18653/v1/2022.naacl-main.57}.
\newblock URL \url{https://aclanthology.org/2022.naacl-main.57/}.

\bibitem[Krause et~al.(2021)Krause, Gotmare, McCann, Keskar, Joty, Socher, and
  Rajani]{krause2021gedi}
Ben Krause, Akhilesh~Deepak Gotmare, Bryan McCann, Nitish~Shirish Keskar,
  Shafiq~R. Joty, Richard Socher, and Nazneen~Fatema Rajani.
\newblock Gedi: Generative discriminator guided sequence generation.
\newblock In Marie{-}Francine Moens, Xuanjing Huang, Lucia Specia, and
  Scott~Wen{-}tau Yih, editors, \emph{Findings of the Association for
  Computational Linguistics: {EMNLP} 2021, Virtual Event / Punta Cana,
  Dominican Republic, 16-20 November, 2021}, 2021.

\bibitem[Zhang et~al.(2024)Zhang, Kung, Yoshida, Broeck, and
  Peng]{zhang2024ctrlg}
Honghua Zhang, Po-Nien Kung, Masahiro Yoshida, Guy Van~den Broeck, and Nanyun
  Peng.
\newblock Adaptable logical control for large language models.
\newblock In \emph{Proceedings of NeurIPS}, 2024.

\bibitem[Miao et~al.(2019)Miao, Zhou, Mou, Yan, and Li]{Miao19sampling}
Ning Miao, Hao Zhou, Lili Mou, Rui Yan, and Lei Li.
\newblock {CGMH:} constrained sentence generation by metropolis-hastings
  sampling.
\newblock In \emph{Proceedings of {AAAI}}, pages 6834--6842, 2019.

\bibitem[Loula et~al.(2025)Loula, LeBrun, Du, Lipkin, Pasti, Grand, Liu, Emara,
  Freedman, Eisner, Cotterell, Mansinghka, Lew, Vieira, and
  O'Donnell]{loula2025smc}
Joao Loula, Benjamin LeBrun, Li~Du, Ben Lipkin, Clemente Pasti, Gabriel Grand,
  Tianyu Liu, Yahya Emara, Marjorie Freedman, Jason Eisner, Ryan Cotterell,
  Vikash Mansinghka, Alexander~K. Lew, Tim Vieira, and Timothy~J. O'Donnell.
\newblock Syntactic and semantic control of large language models via
  sequential monte carlo.
\newblock In \emph{Proceedings of ICLR}, 2025.

\bibitem[Willard and Louf(2023)]{willard2023efficient}
Brandon~T Willard and R{\'e}mi Louf.
\newblock Efficient guided generation for large language models.
\newblock \emph{arXiv preprint arXiv:2307.09702}, 2023.

\bibitem[Lundberg et~al.(2024)Lundberg, Ribeiro, et~al.]{lundberg2024guidance}
Scott Lundberg, Marco Ribeiro, et~al.
\newblock Guidance.
\newblock \url{https://github.com/guidance-ai/guidance}, 2024.
\newblock Accessed: 2025-09-17.

\bibitem[Manginas et~al.(2025)Manginas, Paliouras, and
  Raedt]{manginas2025nesyaneurosymbolicautomata}
Nikolaos Manginas, George Paliouras, and Luc~De Raedt.
\newblock Nesya: Neurosymbolic automata, 2025.
\newblock URL \url{https://arxiv.org/abs/2412.07331}.

\bibitem[Umili et~al.(2023)Umili, Capobianco, and De~Giacomo]{KR2023-65}
Elena Umili, Roberto Capobianco, and Giuseppe De~Giacomo.
\newblock {Grounding LTLf Specifications in Image Sequences}.
\newblock In \emph{{Proceedings of the 20th International Conference on
  Principles of Knowledge Representation and Reasoning}}, pages 668--678, 8
  2023.
\newblock \doi{10.24963/kr.2023/65}.
\newblock URL \url{https://doi.org/10.24963/kr.2023/65}.

\bibitem[Dijkstra(1959)]{dijkstra1959note}
Edsger~W. Dijkstra.
\newblock A note on two problems in connexion with graphs.
\newblock \emph{Numerische mathematik}, 1\penalty0 (1):\penalty0 269--271,
  1959.

\bibitem[Lu and Peng(2021)]{insnet}
Sidi Lu and Nanyun Peng.
\newblock On efficient training, controllability and compositional
  generalization of insertion-based language generators.
\newblock \emph{CoRR}, abs/2102.11008, 2021.
\newblock URL \url{https://arxiv.org/abs/2102.11008}.

\bibitem[Meng et~al.(2022)Meng, Lu, Peng, and Chang]{meng2022_nado}
Tao Meng, Sidi Lu, Nanyun Peng, and Kai{-}Wei Chang.
\newblock Controllable text generation with neurally-decomposed oracle.
\newblock In \emph{Proceedings of {NeurIPS}}, 2022.

\bibitem[Zhang et~al.(2023)Zhang, Dang, Peng, and Broeck]{zhang2023gelato}
Honghua Zhang, Meihua Dang, Nanyun Peng, and Guy Van~den Broeck.
\newblock Tractable control for autoregressive language generation.
\newblock In \emph{Proceedings of ICML}, 2023.

\bibitem[Roth(1996)]{roth96}
Dan Roth.
\newblock On the hardness of approximate reasoning.
\newblock \emph{Artificial Intelligence}, 82\penalty0 (1):\penalty0 273--302,
  1996.
\newblock ISSN 0004-3702.
\newblock \doi{https://doi.org/10.1016/0004-3702(94)00092-1}.
\newblock URL
  \url{https://www.sciencedirect.com/science/article/pii/0004370294000921}.

\bibitem[Fuggitti(2019)]{Fuggitti2019}
Francesco Fuggitti.
\newblock Ltlf2dfa, mar 2019.
\newblock URL \url{https://doi.org/10.5281/zenodo.3888410}.

\bibitem[Reis and Moreira(2002)]{reis2002fado}
Rog{\'e}rio Reis and Nelma Moreira.
\newblock Fado: tools for finite automata and regular expressions manipulation.
\newblock 2002.

\bibitem[Papineni et~al.(2002)Papineni, Roukos, Ward, and
  Zhu]{papineni2002bleu}
Kishore Papineni, Salim Roukos, Todd Ward, and Wei-Jing Zhu.
\newblock Bleu: a method for automatic evaluation of machine translation.
\newblock In \emph{Proceedings of the 40th annual meeting of the Association
  for Computational Linguistics}, pages 311--318, 2002.

\bibitem[Lin(2004)]{lin-2004-rouge}
Chin-Yew Lin.
\newblock {ROUGE}: A package for automatic evaluation of summaries.
\newblock In \emph{Text Summarization Branches Out}, pages 74--81, Barcelona,
  Spain, July 2004. Association for Computational Linguistics.
\newblock URL \url{https://aclanthology.org/W04-1013/}.

\bibitem[Vedantam et~al.(2015)Vedantam, Lawrence~Zitnick, and Parikh]{cider}
Ramakrishna Vedantam, C.~Lawrence~Zitnick, and Devi Parikh.
\newblock Cider: Consensus-based image description evaluation.
\newblock In \emph{Proceedings of the IEEE Conference on Computer Vision and
  Pattern Recognition (CVPR)}, June 2015.

\bibitem[Anderson et~al.(2016)Anderson, Fernando, Johnson, and Gould]{spice}
Peter Anderson, Basura Fernando, Mark Johnson, and Stephen Gould.
\newblock Spice: Semantic propositional image caption evaluation.
\newblock In Bastian Leibe, Jiri Matas, Nicu Sebe, and Max Welling, editors,
  \emph{Computer Vision -- ECCV 2016}, pages 382--398, Cham, 2016. Springer
  International Publishing.
\newblock ISBN 978-3-319-46454-1.

\bibitem[Lin et~al.(2020)Lin, Zhou, Shen, Zhou, Bhagavatula, Choi, and
  Ren]{Lin2020}
Bill~Yuchen Lin, Wangchunshu Zhou, Ming Shen, Pei Zhou, Chandra Bhagavatula,
  Yejin Choi, and Xiang Ren.
\newblock Commongen: A constrained text generation challenge for generative
  commonsense reasoning, 2020.
\newblock URL \url{https://arxiv.org/abs/1911.03705}.

\bibitem[Radford et~al.(2019)Radford, Wu, Child, Luan, Amodei, Sutskever,
  et~al.]{gpt2}
Alec Radford, Jeffrey Wu, Rewon Child, David Luan, Dario Amodei, Ilya
  Sutskever, et~al.
\newblock Language models are unsupervised multitask learners.
\newblock \emph{OpenAI blog}, 1\penalty0 (8):\penalty0 9, 2019.

\bibitem[Brown et~al.(2020)Brown, Mann, Ryder, Subbiah, Kaplan, Dhariwal,
  Neelakantan, Shyam, Sastry, Askell, et~al.]{brown2020language}
Tom Brown, Benjamin Mann, Nick Ryder, Melanie Subbiah, Jared~D Kaplan, Prafulla
  Dhariwal, Arvind Neelakantan, Pranav Shyam, Girish Sastry, Amanda Askell,
  et~al.
\newblock Language models are few-shot learners.
\newblock \emph{Advances in neural information processing systems},
  33:\penalty0 1877--1901, 2020.

\bibitem[OpenAI(2024{\natexlab{a}})]{openai2024gpt4technicalreport}
OpenAI.
\newblock Gpt-4 technical report, 2024{\natexlab{a}}.
\newblock URL \url{https://arxiv.org/abs/2303.08774}.

\bibitem[OpenAI(2024{\natexlab{b}})]{openai2024gpt4ocard}
OpenAI.
\newblock Gpt-4o system card, 2024{\natexlab{b}}.
\newblock URL \url{https://arxiv.org/abs/2410.21276}.

\bibitem[OpenAI(2024{\natexlab{c}})]{openai2024openaio1card}
OpenAI.
\newblock Openai o1 system card, 2024{\natexlab{c}}.
\newblock URL \url{https://arxiv.org/abs/2412.16720}.

\bibitem[Donahue et~al.(2020)Donahue, Lee, and Liang]{ilm}
Chris Donahue, Mina Lee, and Percy Liang.
\newblock Enabling language models to fill in the blanks.
\newblock \emph{CoRR}, abs/2005.05339, 2020.
\newblock URL \url{https://arxiv.org/abs/2005.05339}.

\bibitem[Mostafazadeh et~al.(2016)Mostafazadeh, Chambers, He, Parikh, Batra,
  Vanderwende, Kohli, and Allen]{rocstories}
Nasrin Mostafazadeh, Nathanael Chambers, Xiaodong He, Devi Parikh, Dhruv Batra,
  Lucy Vanderwende, Pushmeet Kohli, and James~F. Allen.
\newblock A corpus and evaluation framework for deeper understanding of
  commonsense stories.
\newblock \emph{CoRR}, abs/1604.01696, 2016.
\newblock URL \url{http://arxiv.org/abs/1604.01696}.

\bibitem[Raedt et~al.(2020)Raedt, Dumancic, Manhaeve, and Marra]{raedt_survey}
Luc~De Raedt, Sebastijan Dumancic, Robin Manhaeve, and Giuseppe Marra.
\newblock From statistical relational to neuro-symbolic artificial
  intelligence.
\newblock In \emph{Proceedings of International Joint Conference on Artificial
  Intelligence}, 2020.

\bibitem[d’Avila Garcez and Lamb(2023)]{third_wave}
Artur d’Avila Garcez and Luis~C. Lamb.
\newblock {Neurosymbolic AI: The 3rd wave}.
\newblock \emph{Artificial Intelligence Review}, 2023.

\bibitem[Dash et~al.(2022)Dash, Chitlangia, Ahuja, and
  Srinivasan]{dash_informed_ml_review}
Tirtharaj Dash, Sharad Chitlangia, Aditya Ahuja, and Ashwin Srinivasan.
\newblock A review of some techniques for inclusion of domain-knowledge into
  deep neural networks.
\newblock \emph{Scientific Reports}, 12\penalty0 (1), 2022.

\bibitem[Giunchiglia et~al.(2022)Giunchiglia, Stoian, and
  Lukasiewicz]{giunchiglia2022ijcai}
Eleonora Giunchiglia, Mihaela~Catalina Stoian, and Thomas Lukasiewicz.
\newblock Deep learning with logical constraints.
\newblock In \emph{Proceedings of International Joint Conference on Artificial
  Intelligence}, 2022.

\bibitem[Manhaeve et~al.(2018)Manhaeve, Dumancic, Kimmig, Demeester, and
  De~Raedt]{manhaeve2018deepproblog}
Robin Manhaeve, Sebastijan Dumancic, Angelika Kimmig, Thomas Demeester, and Luc
  De~Raedt.
\newblock {DeepProbLog: N}eural probabilistic logic programming.
\newblock In \emph{Proceedings of Neural Information Processing Systems}, 2018.

\bibitem[Xu et~al.(2018)Xu, Zhang, Friedman, Liang, and den
  Broeck]{xu2018semantic}
Jingyi Xu, Zilu Zhang, Tal Friedman, Yitao Liang, and Guy~Van den Broeck.
\newblock A semantic loss function for deep learning with symbolic knowledge.
\newblock In \emph{Proceedings of International Conference on Machine
  Learning}, 2018.

\bibitem[van Krieken et~al.(2023)van Krieken, Thanapalasingam, Tomczak,
  Harmelen, and Teije]{krieken2023anesi}
Emile van Krieken, Thiviyan Thanapalasingam, Jakub~M. Tomczak, Frank~Van
  Harmelen, and Annette~Ten Teije.
\newblock A-ne{SI}: A scalable approximate method for probabilistic
  neurosymbolic inference.
\newblock In \emph{Proceedings of Neural Information Processing Systems}, 2023.

\bibitem[Donadello et~al.(2017)Donadello, Serafini, and Garcez]{LTN}
Ivan Donadello, Luciano Serafini, and Artur~D'Avila Garcez.
\newblock Logic tensor networks for semantic image interpretation.
\newblock In \emph{Proceedings of the 26th International Joint Conference on
  Artificial Intelligence}, IJCAI'17, page 1596–1602. AAAI Press, 2017.
\newblock ISBN 9780999241103.

\bibitem[Giunchiglia et~al.(2024)Giunchiglia, Tatomir, Stoian, and
  Lukasiewicz]{giunchiglia2024ccn+}
Eleonora Giunchiglia, Alex Tatomir, Mihaela~Catalina Stoian, and Thomas
  Lukasiewicz.
\newblock {CCN+:} {A} neuro-symbolic framework for deep learning with
  requirements.
\newblock \emph{International Journal of Approximate Reasoning}, 171, 2024.

\bibitem[Diligenti et~al.(2012)Diligenti, Gori, Maggini, and
  Rigutini]{diligenti2012SBR}
Michelangelo Diligenti, Marco Gori, Marco Maggini, and Leonardo Rigutini.
\newblock Bridging logic and kernel machines.
\newblock \emph{Machine Learning}, 2012.

\bibitem[Fischer et~al.(2019)Fischer, Balunovic, Drachsler{-}Cohen, Gehr,
  Zhang, and Vechev]{fischer2019DL2}
Marc Fischer, Mislav Balunovic, Dana Drachsler{-}Cohen, Timon Gehr, Ce~Zhang,
  and Martin Vechev.
\newblock {DL2: T}raining and querying neural networks with logic.
\newblock In \emph{Proceedings of International Conference on Machine
  Learning}, 2019.

\bibitem[van Krieken et~al.(2022)van Krieken, Acar, and van
  Harmelen]{krieken2022_analyzing}
Emile van Krieken, Erman Acar, and Frank van Harmelen.
\newblock Analyzing differentiable fuzzy logic operators.
\newblock \emph{Artificial Intellegence}, 302:\penalty0 103602, 2022.

\bibitem[Giunchiglia and Lukasiewicz(2021)]{giunchiglia2021jair}
Eleonora Giunchiglia and Thomas Lukasiewicz.
\newblock Multi-label classification neural networks with hard logical
  constraints.
\newblock \emph{Journal of Artificial Intelligence Research}, 72, 2021.

\bibitem[Ahmed et~al.(2022)Ahmed, Teso, Chang, den Broeck, and
  Vergari]{ahmed2022spl}
Kareem Ahmed, Stefano Teso, Kai{-}Wei Chang, Guy~Van den Broeck, and Antonio
  Vergari.
\newblock Semantic probabilistic layers for neuro-symbolic learning.
\newblock In \emph{Proceedings of Neural Information Processing Systems}, 2022.

\bibitem[Pryor et~al.(2023)Pryor, Dickens, Augustine, Albalak, Wang, and
  Getoor]{pryor2023NeuPSL}
Connor Pryor, Charles Dickens, Eriq Augustine, Alon Albalak, William~Yang Wang,
  and Lise Getoor.
\newblock Neupsl: Neural probabilistic soft logic.
\newblock In \emph{Proceedings of International Joint Conference on Artificial
  Intelligence}, 2023.

\bibitem[Hoernle et~al.(2022)Hoernle, Karampatsis, Belle, and
  Gal]{hoernle2022multiplexnet}
Nicholas Hoernle, Rafael{-}Michael Karampatsis, Vaishak Belle, and Kobi Gal.
\newblock {MultiplexNet: T}owards fully satisfied logical constraints in neural
  networks.
\newblock In \emph{Proceedings of Association for the Advancement of Artificial
  Intelligence}, 2022.

\bibitem[Misino et~al.(2022)Misino, Marra, and Sansone]{misino2023tvael}
Eleonora Misino, Giuseppe Marra, and Emanuele Sansone.
\newblock {VAEL: Bridging Variational Autoencoders and Probabilistic Logic
  Programming}.
\newblock In \emph{Proceedings of Neural Information Processing Systems}, 2022.

\bibitem[Yang and Klein(2021)]{Yang2021_fudge}
Kevin Yang and Dan Klein.
\newblock {FUDGE:} controlled text generation with future discriminators.
\newblock In \emph{Proceedings of {NAACL-HLT}}, pages 3511--3535, 2021.

\bibitem[Ahmed et~al.(2025)Ahmed, Chang, and den Broeck]{ahmed2025resampling}
Kareem Ahmed, Kai-Wei Chang, and Guy~Van den Broeck.
\newblock Controllable generation via locally constrained resampling.
\newblock In \emph{Proceedings of {ICLR}}, 2025.

\end{thebibliography}

\newpage

\appendix

\section{Proofs}

\subsection{Soundness of the \model{} algorithm}

We prove here that our method is sound (Theorem~\ref{theorem:soundness}).

\begin{proof}[Proof of Theorem \ref{theorem:soundness}. (Soundness of the \model{} algorithm)]
We prove that the algorithm is sound by induction on the length $t$ of the sequence. 
More precisely, we propose the following inductive hypothesis $$\mathcal{H}_t: \text{for all beams } (x_{<t}, s, q) \in \mathcal{B}_t, d_{q_t} \leq T - t.$$ The base case is $t=0$: the initial beam only contains the prompt $x_0$, the initial state of the DFA $q_0$, and the distance $d_0$ from an accepting state. By the hypothesis of the theorem, we have $d_0 \leq T$ and thus $\mathcal{H}_0$ is verified.

Let us now assume that $\mathcal{H}_t$ is true, for $t\in[1,T-1]$.
At line 12 of the algorithm:
\begin{itemize}
    \item if a transition leads to a state $q'$ such that $d_{q'} > T - t$ then it is impossible to complete an accepting sequence within the remaining steps. But this choice is excluded by setting $Z[i,x] \leftarrow - \infty$.
\end{itemize}

This ensures that no sequence in $\mathcal{B}$ is extended to a non-accepting sequence.

Indeed, at line 22:
\begin{itemize}
    \item we choose the top $k$ beam extensions among the valid ones with respect to their score (denoted by $Z$ in the pseudo-code), i.e. among those having distance $d_{q'_{t+1}} \leq T - (t+1)$, since invalid outputs have $-\infty$ as a score. In the case where we have less than $k$ valid beam extensions, then we pad the remaining beams with copies of the valid beams. We know that at least one valid extension exists for each beam, since we assume that $\mathcal{H}_t$ is true.  
\end{itemize}

Thus we have shown that $\mathcal{H}_{t+1}$ is true whenever $\mathcal{H}_t$ is true. By induction, $\mathcal{H}_T$ is true.

At time $T$, the algorithm returns the best sequence among those included in $\mathcal{B}_T$. But, by $\mathcal{H}_T$, every sequence in $\mathcal{B}_T$ has distance $0$ from an accepting state, and hence terminates in an accepting state. Thus the algorithm is sound.
\end{proof}

\subsection{Complexity of MAP Inference on Markov Chains}

\begin{proof}[Proof of Theorem~\ref{theorem:MCMAP}]
We show NP-hardness by providing a polynomial-time reduction from the \emph{Constrained Shortest Path (CSP) problem with uniform arc lengths} to the \emph{MAP inference problem on time-homogeneous Markov chains under a unary constraint}.

\paragraph{Step 1: CSP Formulation.}  
Consider a directed graph \(G = (V, E)\), where each arc \((i,j)\in E\) has an associated cost \(c(i,j)\geq 0\) and a length \(b(i,j)\in\mathbb{N}\). We focus on the special case where all arcs have the same length, specifically \(b(i,j) = 1\) for all \((i,j)\in E\). Given vertices \(s, t \in V\) (source and target) and an integer bound \(\beta \geq 1\), the CSP problem seeks a path of exact length \(\beta\) from \(s\) to \(t\) that minimizes total cost:
\[
\min_{x_{1:\beta} \in V^\beta,\; x_1 = s,\; x_\beta = t,\; (x_k,x_{k+1})\in E} \sum_{k=1}^{\beta-1} c(x_k,x_{k+1}).
\]
This uniform-length CSP variant is known to be NP-hard~\citep{Erzin2022}.

\paragraph{Step 2: MAP Inference Problem Formulation.}  
In the MAP inference setting, we have a time-homogeneous Markov chain over a state space \(S\), an initial distribution \(\mu\), and transition matrix \(P\). Given a unary constraint specifying the state at time \(\beta\), i.e., \(X_\beta = t\), the MAP inference problem is:
\[
x_{1:\beta}^{*} =  \argmax_{x_{1:\beta}\in S^\beta,\; x_\beta = t} \mu(x_1)\prod_{k=1}^{\beta-1}P(x_{k+1}\mid x_k).
\]

\paragraph{Step 3: Reduction from CSP to MAP Inference.}  
Given a CSP instance \((G,c,s,t,\beta)\), construct a MAP inference instance as follows:

\begin{enumerate}
    \item \textbf{State space.} Define the state space as \(S = V\cup D\), where \(D = \{d_i \mid i\in V\}\) is a set of newly introduced dummy states, with one dummy state per original vertex.

    \item \textbf{Initial distribution.} Set \(\mu(x) = \delta_{x,s}\), ensuring the chain always starts at the source vertex \(s\).

    \item \textbf{Transition probabilities.} First, define an extended cost function \(\tilde{c}: V\times S \to [0,+\infty]\) by:
    \[
    \tilde{c}(i,j)=\begin{cases}
    c(i,j), & (i,j)\in E\\[4pt]
    -\log(Z_{\max}-Z_i), & j = d_i\\[4pt]
    +\infty, & \text{otherwise}
    \end{cases}
    \]
    where for each vertex \(i\in V\), we set:
    \[
    Z_i=\sum_{j\in V} e^{-c(i,j)}, \quad Z_{\max}=\max_{i\in V}Z_i.
    \]

    With these definitions, we set the transition probabilities for all \(i \in V\) as:
    \[
    P(j\mid i)=\frac{e^{-\tilde{c}(i,j)}}{Z_{\max}}, \quad j\in S.
    \]

    This ensures that all transitions from states in \(V\) are properly normalized, as:
    \[
    \sum_{j\in S}P(j\mid i)=\frac{Z_i+(Z_{\max}-Z_i)}{Z_{\max}}=1,\quad\forall i\in V.
    \]

    \item \textbf{Absorbing states.} Explicitly set the target vertex \(t\) and all dummy states \(d_i\) as absorbing states:
    \[
    P(t\mid t)=1,\quad P(j\mid t)=0\text{ for }j\neq t,\quad\text{and}\quad P(d_i\mid d_i)=1,\quad P(j\mid d_i)=0\text{ for }j\neq d_i.
    \]

    Thus, any path entering a dummy state or \(t\) remains there indefinitely.

    \item \textbf{Unary constraint.} Set the unary constraint as  \( \phi \equiv X_\beta = t\).
\end{enumerate}

\paragraph{Step 4: Correctness and Equivalence.}  
Consider any path \(x_{1:\beta}\) that satisfies the constraint \(X_\beta = t\). If the path ever enters a dummy state \(d_i\), it will remain there indefinitely (absorbing), contradicting the constraint \(X_\beta = t\). Thus, any feasible path must stay entirely within \(V\).

For feasible paths \(x_{1:\beta}\in V^\beta\), we have:
\begin{align*}
    P(x_{1:\beta})& =\prod_{k=1}^{\beta-1}\frac{e^{-\tilde{c}(x_k,x_{k+1})}}{Z_{\max}}=Z_{\max}^{-(\beta-1)}\cdot e^{-\sum_{k=1}^{\beta-1}\tilde{c}(x_k,x_{k+1})} \cdot \delta_{x_1,s} \\ & \underset{(x_{1:\beta}\models \phi)}{\operatorname{=}}  Z_{\max}^{-(\beta-1)}\cdot e^{-\sum_{k=1}^{\beta-1}c(x_k,x_{k+1})} 
\end{align*}

Hence, the MAP objective is exactly equivalent to minimizing the total cost:
\[
\argmin_{x_{1:\beta}\in S^\beta,\; x_\beta = t}-\log P(x_{1:\beta})=\argmin_{x_{1:\beta} \in V^\beta,\; x_1 = s,\; x_\beta = t,\; (x_k,x_{k+1})\in E} \sum_{k=1}^{\beta-1} c(x_k,x_{k+1}),
\]
recovering precisely the original CSP objective.

\paragraph{Step 5: Complexity and Conclusion.}  
The construction described above clearly runs in polynomial time (adding one dummy state per vertex and computing simple exponentials). Thus, we have demonstrated a polynomial-time reduction from the CSP problem (known NP-hard) to the MAP inference problem under unary constraints in a Markov chain.
\end{proof}

\section{Hardware for Experiments}

All the experiments were run on a machine equipped with the following infrastructure: Intel® Xeon® Silver 4416+ CPU, 20 Cores, 40 Threads, 2.00/3.90 GHz; NVidia L40S GPU, 48 GB GDDR6, 18176 CUDA Cores, 142 RT Cores, 568 Tensor Cores.

\section{Ordered Fashion MNIST}
\label{ordered_fashion_mnist}

For this task, we trained a CNN model on the Fashion MNIST training set for 7 epochs using a batch size of 64 and a learning rate of 0.002. The model consists of two convolutional blocks followed by three fully connected layers, incorporating Dropout and Layer Normalization. The convolutional layers use 2×2 and 5×5 filters, each followed by MaxPooling operations. The final classification layer applies a Softmax activation for 10-class classification.  Afterward, we also trained the CNN-LSTM model, where the CNN component was kept frozen and only the LSTM was trained on the Ordered Fashion MNIST training set. We selected the LSTM checkpoint that achieved the best performance on the validation set to capture the temporal dependencies across sequences. The LSTM module was implemented with a hidden dimension of 128, one recurrent layer, and batch-first input formatting. Its outputs were then passed through a fully connected linear layer mapping to the 10 output classes, enabling sequence-level classification. This setup allowed the model to leverage pre-trained spatial feature representations from the CNN while adapting the temporal modeling capacity of the LSTM to the sequential structure of the Ordered Fashion MNIST dataset. We used CrossEntropyLoss as the loss function, and the Adam optimizer for both trainings.

\subsection{Natural Language Constraints}

We report here all the constraints enforced on the Ordered Fashion-Mnist dataset, expressed in natural language. The \ltlf formulation of each constraint is reported below.

\begin{itemize}
    \item If you wear a T-shirt/top, you will not be able to wear a Shirt, Dress, or another T-shirt/top.
    \item If you wear Trousers, you will not be able to wear a Dress or another Trouser.
    \item If you wear a Pullover, you will not be able to wear a Dress, T-shirt/top, Shirt, or another Pullover.
    \item If you wear a Dress, you will not be able to wear a T-shirt/top, Shirt, Trouser, Pullover, or another Dress.
    \item If you wear a Coat, you will not be able to wear a T-shirt/top, Shirt, Pullover, Dress, or another Coat.
    \item If you wear Sandals, you will not be able to wear Sneakers, Trousers, Ankle boots, or another pair of Sandals.
    \item If you wear a Shirt, you will not be able to wear a T-shirt/top, Dress, or another Shirt.
    \item If you wear Sneakers, you will not be able to wear Sandals, Trousers, Ankle boots, or another pair of Sneakers.
    \item If you wear a Bag, you will not be able to wear a T-shirt/top, Shirt, Dress, Pullover, Coat, or another Bag.
    \item If you wear Ankle boots, you will not be able to wear Sandals, Trousers, Sneakers, or another pair of Ankle boots.
    \item You must wear at least one of the following: T-shirt/top, Pullover, Shirt, or Dress.
    \item You must wear at least one of the following: Trouser or Dress.
    \item You must wear at least one of the following: Sandal, Sneaker, or Ankle boot.
\end{itemize}

\subsection{\ltlf Constraints}

The listed formulas must be interpreted as being conjoined via the logical AND operator (\(\land\)). That is, the complete specification is given by:

\[
\varphi = \varphi_1 \land \varphi_2 \land \dots \land \varphi_{13}
\]

where each \(\varphi_i\) corresponds to the respective in the list:
\begin{itemize}
    \item $G(t\_shirt\_top \to \neg F\, Shirt) \land G(t\_shirt\_top \to \neg F\, Dress) \land G(t\_shirt\_top \to WX\, G\, \neg t\_shirt\_top)$
    \item $G(Trouser \to \neg F\, Dress) \land G(Trouser \to WX\, G\, \neg Trouser)$
    \item $G(Pullover \to \neg F\, Dress) \land G(Pullover \to \neg F\, t\_shirt\_top) \land G(Pullover \to \neg F\, Shirt) \land G(Pullover \to WX\, G\, \neg Pullover)$
    \item $G(Dress \to \neg F\, t\_shirt\_top) \land G(Dress \to \neg F\, Shirt) \land G(Dress \to \neg F\, Trouser) \land G(Dress \to \neg F\, Pullover) \land G(Dress \to WX\, G\, \neg Dress)$
    \item $G(Coat \to \neg F\, t\_shirt\_top) \land G(Coat \to \neg F\, Shirt) \land G(Coat \to \neg F\, Pullover) \land G(Coat \to \neg F\, Dress) \land G(Coat \to WX\, G\, \neg Coat)$
    \item $G(Sandal \to \neg F\, Sneaker) \land G(Sandal \to \neg F\, Trouser) \land G(Sandal \to \neg F\, Ankle\ boot) \land G(Sandal \to WX\, G\, \neg Sandal)$
    \item $G(Shirt \to \neg F\, t\_shirt\_top) \land G(Shirt \to \neg F\, Dress) \land G(Shirt \to WX\, G\, \neg Shirt)$
    \item $G(Sneaker \to \neg F\, Sandal) \land G(Sneaker \to \neg F\, Trouser) \land G(Sneaker \to \neg F\, Ankle\ boot) \land G(Sneaker \to WX\, G\, \neg Sneaker)$
    \item $G(Bag \to \neg F\, t\_shirt\_top) \land G(Bag \to \neg F\, Shirt) \land G(Bag \to \neg F\, Dress) \land G(Bag \to \neg F\, Pullover) \land G(Bag \to \neg F\, Coat) \land G(Bag \to WX\, G\, \neg Bag)$
    \item $G(Ankle\ boot \to \neg F\, Sandal) \land G(Ankle\ boot \to \neg F\, Trouser) \land G(Ankle\ boot \to \neg F\, Sneaker) \land G(Ankle\ boot \to WX\, G\, \neg Ankle\ boot)$
    \item $F(t\_shirt\_top \lor Pullover \lor Shirt \lor Dress)$
    \item $F(Trouser \lor Dress)$
    \item $F(Sandal \lor Sneaker \lor Ankle\ boot)$
\end{itemize}

\section{Ordered CommonGen}

Each experiment in Ordered CommonGen has a prompt and a set of ordered concepts, given as a constraint on the output. A sample prompt for experiments is reported below.

\subsection{Prompt}
\label{prompt}
\# Instruction

Given several concepts (i.e., nouns or verbs), write a short and simple sentence that contains *all* the required words in the given order.\\
The sentence should describe a common scene in daily life, and the concepts should be used in a natural way.

\# Examples

\#\# Example 1\\
- Concepts: "dog, frisbee, catch, throw"\\
- Sentence: The dog eagerly chased the frisbee trying to catch it after its owner threw it.

\#\# Example 2\\
- Concepts: "apple, place, tree, pick"\\
- Sentence: I found an apple in a place near a tree and I picked it up.

\# Your Task

- Concepts: **Concepts**\\
- Sentence:

\subsection{\ltlf Constraints example}
The listed formulas should be interpreted as being conjoined via the logical AND operator ($\land$). 
Therefore, the complete specification is given by:

\[
\varphi = \varphi_1 \land \varphi_2 \land \cdots \land \varphi_5
\]

where each $\varphi_i$ corresponds to the respective formula in the list below. This represents an example with 3 concepts.

\begin{itemize}
    \item $((\neg(secondword \lor dot)\ U\ firstword) \land F(secondword))$
    \item $((\neg(thirdword \lor dot)\ U\ secondword) \land F(thirdword))$
    \item $((\neg(eos \lor dot)\ U\ thirdword) \land F(eos))$
    \item $G(dot \to X\ eos)$
    \item $G(firstword \lor secondword \lor thirdword \lor dot \lor eos \lor nomatch)$
\end{itemize}

\subsection{Qualitative Example}
GPT-2 large on CommonGen (Concepts: "shave", "look", "mirror", "face")

No \model{}: "The boy looks at himself in the mirror." (Fails constraint: "shave" and "face" are missing)\\
\model{} ($\alpha = 0.25$): "Shave your face and look at the mirror." (Satisfies constraint, natural structure)\\
\model{} ($\alpha = 0.5$): "The boy looks at the mirror to shave his face." (Satisfies constraint, natural structure)\\
\model{} ($\alpha = 0.75$): "Look mirror face shave." (Satisfies constraint, but unnatural structure)

\subsection{Use of Large Language Models}
Large Language Models (LLMs) were employed in a limited manner during the preparation of this manuscript. They were used to assist in improving the phrasing of certain passages and to accelerate technical aspects of the writing process, such as generating LaTeX code for tables and formatting. Importantly, the use of LLMs was restricted to supporting and refining the writing process; they did not contribute to the research design, analysis, or interpretation of results. All text and suggestions generated by the models were thoroughly reviewed and verified by the authors to ensure accuracy and appropriateness before inclusion in the final version of the paper.

\end{document}